\def\@captype{table}
\declaretheorem[name=Hypothesis,refname={Hypothesis,Hypotheses}]{hyp}
\declaretheorem[name=Proposition,refname={Proposition,Propositions}]{prop}
\newcommand{\bmf}[1]{\bm{\mathsf{#1}}}
\DeclareMathOperator*{\E}{\mathbb{E}}
\DeclareMathOperator*{\Var}{Var}
\DeclareMathOperator{\tr}{tr}
\newcommand{\PP}{\mathbb{P}}
\newcommand{\R}{\mathbb{R}}
\newcommand{\tar}{\text{tar}}
\newcommand{\tp}{^\mathsf{T}}
\newcommand{\ve}{\bmf{e}}
\newcommand{\vL}{\bmf{L}}
\newcommand{\vp}{\bmf{p}}
\newcommand{\vq}{\bmf{q}}
\newcommand{\vw}{\bmf{w}}
\newcommand{\vx}{\bmf{x}}
\newcommand{\vz}{\bmf{z}}
\newcommand{\xo}{{\color{orange} \vx_o}}
\newcommand{\yo}{{\color{orange} y_o}}
\newcommand{\xu}{{\color{blue} \vx_u}}
\newcommand{\yu}{{\color{orange} y_u}}
\newcommand{\ud}{\mathrm{d}}
\newenvironment{Ualgorithm}[1][htpb]{\def\@algocf@post@ruled{\kern\interspacealgoruled\hrule  height\algoheightrule\kern3pt\relax}%
\def\@algocf@capt@ruled{under}%
\begin{algorithm}[#1]}
{\end{algorithm}}
\title{Better Supervisory Signals\\by Observing Learning Paths}
\author{%
Yi Ren\\
UBC\\
\texttt{renyi.joshua@gmail.com}
\And
Shangmin Guo\\
University of Edinburgh\\
\texttt{s.guo@ed.ac.uk}
\And
Danica J.\ Sutherland\\
UBC and Amii\\
\texttt{dsuth@cs.ubc.ca}
}
\begin{document}

\maketitle

\begin{abstract}
Better-supervised models might have better performance. In this paper, we first clarify what makes for good supervision for a classification problem, and then explain two existing label refining methods, label smoothing and knowledge distillation, in terms of our proposed criterion. To further answer why and how better supervision emerges, we observe the learning path, i.e., the trajectory of the model's predictions during training, for each training sample. We find that the model can spontaneously refine ``bad'' labels through a ``zig-zag'' learning path, which occurs on both toy and real datasets. Observing the learning path not only provides a new perspective for understanding knowledge distillation, overfitting, and learning dynamics, but also reveals that the supervisory signal of a teacher network can be very unstable near the best points in training on real tasks. Inspired by this, we propose a new knowledge distillation scheme, Filter-KD, which improves downstream classification performance in various settings.
\end{abstract}

\section{Introduction}
\label{sec:intro}
In multi-class classification problems, we usually supervise our model with ``one-hot'' labels: label vectors $y$ which have $y_i = 1$ for one $i$, and $0$ for all other dimensions.
Over time, however, it has gradually become clear that this ``default'' setup is not always the best choice in practice,
in that other schemes can yield better performance on held-out test sets.
One such alternative is to summarize a distribution of human annotations,
as \citet{human_label} did for CIFAR10.
An alternative approach is label smoothing \citep[e.g.][]{ls_first},
mixing between a one-hot label and the uniform distribution.
Knowledge distillation (KD),
first training a teacher network on the training set
and then a student network on the teacher's output probabilities,
was originally proposed for model compression \citep{KD_initial}
but can also be thought of as refining the supervision signal:
it provides ``soft'' teacher outputs rather than hard labels to the student.

Knowledge distillation is promising because it requires no additional annotation effort,
but -- unlike label smoothing -- can still provide sample-specific refinement.
Perhaps surprisingly, knowledge distillation can improve student performance even when the teacher is of \emph{exactly the same form as the student} and trained on the same data;
this is known as self-distillation \citep{BAN,beyourownteacher}.
There have been many recent attempts to explain knowledge distillation and specifically self-distillation \citep[e.g.][]{KD_probability,kd_understanding_long,tang2020understanding},
from both optimization and supervision perspectives. 
We focus on the latter area, where it is usually claimed that the teacher provides useful ``dark knowledge'' to the student through its labels.

Inspired by this line of work, we further explore why and how this improved supervisory signal emerges during the teacher's one-hot training.
Specifically, we first clarify that given any input sample $\vx$, good supervision signals should be close (in $L_2$ distance) to the ground truth categorical distribution, i.e., $p^*(y \mid \vx)$.
We then show that a neural network (NN) can automatically refine ``bad labels'', 
those where $p^*(y \mid \bmf x)$ is far from the training set's one-hot vector.\footnote{This might be because $\bmf x$ is ambiguous (perhaps $p^*(y \mid \bmf x)$ is flat, or we simply got a sample from a less-likely class), or because the one-hot label has been corrupted through label noise or otherwise is ``wrong.''}
During one-hot training, the model prediction on such a sample first moves towards $p^*(y \mid \vx)$, and then slowly converges to its supervisory label, following a ``zig-zag'' pattern.
A well-trained teacher, one that does not overfit to particular training labels, can thus provide supervisory signals closer to $p^*(y \mid \bmf x)$.
We justify analytically (\cref{sec:expl-patterns}) that this pattern is common in gradient descent training.
Our explanations cause us to recognize that this signal can be better identified by taking a moving average of the teacher's prediction, an algorithm we term Filter-KD. This approach yields better supervision and hence better downstream performance, especially when there are many bad labels.

\emph{After completing this work, we became aware of an earlier paper \citep{liu2020early} studying almost the same problem in a similar way. We discuss differences between the papers throughout.}

\section{Supervision influences generalization}
\label{sec:supervision-generalization}
We begin by clarifying how the choice of supervisory signal affects the learned model.

\subsection{Choices of supervision signal}
In $K$-way classification, our goal is to learn a mapping $f:\mathcal{X} \rightarrow \Delta^K$
that can minimize the risk
\begin{equation}
    R(f) \triangleq \E_{(\vx, y) \sim \PP}[ L(y, f(\vx)) ]
    = \int
        {\color{blue}p(\vx)} 
        \,{\color{purple} p(y|\vx)}
        \,L(y, f(\vx))
        \,\ud\vx \,\ud y,
    \label{eq:risk}
\end{equation}
Here the label $y$ is an integer ranging from 1 to $K$,
and $f$ gives predictions in the probability simplex $\Delta^K$ (a nonnegative vector of length $K$ summing to one);
$L(y, f(\vx))$ is the loss function, e.g.\ cross-entropy or square loss.
The input signal $\vx$ is usually high dimensional, e.g., an image or a sequence of word embeddings.
The joint distribution of $(\vx,y)$ is $\PP$, whose density\footnote{Because we are working with classification problems, we use densities with respect to a product of some arbitrary measure on $\bmf x$ (probably Lebesgue) with counting measure on $y$, and assume that these densities exist for notational convenience. None of our arguments will depend on the choice of base measure.} can be written as $p(\vx) p(y|\vx)$.
In practice, as $\PP$ is unknown, we instead (approximately) minimize the empirical risk
\begin{equation}
    R_\text{emp}(f,\mathcal{D})
    \triangleq \sum^{N}_{n=1} \sum^{K}_{k=1}
        {\color{blue}\frac{1}{N}}
        \,{\color{purple} \mathbbm{1}(y_n\!=\!k)}
        \,L(k, f(\vx_n))
    = \sum_{n=1}^{N}
        {\color{blue}\frac{1}{N}}
        \, {\color{purple}\ve_{y_n}\tp}
        \vL(f(\vx_n)),
    \label{eq:emp_risk}
\end{equation}
where $\mathbbm{1}(y_n=k)$ is an indicator function which equals $1$ if $y_n=k$ or $0$ otherwise,
and $\ve_{y_n} \in \{0,1\}^K$ is its one-hot vector form. 
$\vL(f(\vx_n)) = \left( L(1,f(\vx_n)), \dots, L(K,f(\vx_n)) \right) \in \R^K$ is the loss for each possible label.
In $R_\text{emp}$, the $N$ training pairs $\mathcal{D} \triangleq \{\left(\vx_n,y_n\right)\}_{n=1}^{N}$ are sampled i.i.d.\ from $\PP$. 

Comparing \eqref{eq:emp_risk} to \eqref{eq:risk}, we can see $p(\vx)$ is approximated by an uniform distribution over the samples, which is reasonable. 
However, using an indicator function (i.e., one-hot distribution) to approximate $p(y \mid \vx)$ bears more consideration.
For example, if a data point $\vx$ is quite vague and its true $p(y|\vx)$ is flat or multimodal, we might hope to see $\vx$ multiple times with different label $y$ during training.
But actually, most datasets have only one copy of each $\vx$, so we only ever see one corresponding $\ve_{y}$.
Although $R_\text{emp}$ is an unbiased estimator for $R$,
if we used a better (e.g.\ lower-variance) estimate of $p(y \mid \bmf x)$,
we could get a better estimate for $R$ and thus, hopefully, better generalization.

Specifically, suppose we were provided a ``target'' distribution
$p_\tar(y \mid \bmf x)$ (written in vector form as $\vp_\tar(\bmf x)$) for each training point $x$,
as $\mathcal{D}'=\{ (\vx_n, \vp_\tar(\vx_n)) \}_{n=1}^N$.
Then we could use
\begin{equation}
    R_\tar(f,\mathcal{D}')
    \triangleq \sum^{N}_{n=1}\sum^{K}_{k=1}
        {\color{blue}\frac{1}{N}}
        \, {\color{purple} p_\tar(y_n=k \mid \vx_n)}
        \, L(k, f(x_n))
    = \sum_{n=1}^{N}
        {\color{blue}\frac{1}{N}}
        \, {\color{purple} \vp_\tar(\vx_n)\tp}
        \vL(f(\vx_n))
    .\label{eq:tar_risk}
\end{equation}
Standard training with $R_\text{emp}$ is a special case of $R_\tar$, using $\vp_\tar(\vx_n) = \ve_{y_n}$.
The CIFAR10H dataset \citep{human_label} is one attempt at a different $\vp_\tar$,
using multiple human annotators to estimate $\bmf p_\tar$.
Label smoothing \citep[e.g.][]{ls_first} sets $\bmf p_\tar$ to a convex combination of $\bmf e_{y}$ and the constant vector $\frac1K \bmf{1}$.
In knowledge distillation \citep[KD;][]{KD_initial},
a teacher is first trained on $\mathcal D$,
then a student learns from $\mathcal D'$ with $\bmf p_\tar$ based on the teacher's outputs.
All three approaches yield improvements over standard training with $R_\text{emp}$.

\subsection{Measuring the Quality of Supervision}
Choosing a different $\vp_\tar$, then, can lead to a better final model.
Can we characterize which $\vp_\tar$ will do well?
We propose the following, as a general trend.

\begin{hyp} \label{hyp:l2-gen}
Suppose we train a model supervised by $\vp_\tar$, that is, we minimize $R_\tar(f, \mathcal{D}')$.
Then, smaller average $L_2$ distance %
between $\vp_\tar$ and the ground truth $\vp^*$ on these samples, i.e.\ small $\E_{\vx}\left[\|\vp_\tar(\vx)-\vp^*(\vx)\|_2\right]$, %
will in general lead to better generalization performance.
\end{hyp}

This hypothesis is suggested by Proposition 3 of \citet{KD_probability},
which shows (tracking constants omitted in their proof) that for any predictor $f$ and loss bounded as $L(y, \hat y) \le \ell$, 
\begin{equation}
    \E_{\mathcal D'}\left[(R_\tar(f,\mathcal{D}') - R(f))^2 \right]
    \le \frac{1}{N} \Var_{\vx}\left[ \vp_\tar\tp \vL(f(\vx))\right]
    + \ell^2 K \left( \E_{\vx} \lVert \vp_\tar(\vx) - \vp^*(\vx) \rVert_2 \right)^2
    .\label{eq:prop3}
\end{equation}
When $N$ is large, the second term will dominate the right-hand side,
implying smaller average $\lVert \vp_\tar - \vp^* \rVert$
will lead to $R_\tar$ being a better approximation of the true risk $R$;
minimizing it should then lead to a better learned model.
This suggests that the quality of the supervision signal
can be roughly measured by its $L_2$ distance to the ground truth $\vp^*$.
\Cref{sec:risk-est-var} slightly generalizes the result of \citeauthor{KD_probability} with bounds based on total variation ($L_1$) and KL divergences; we focus on the $L_2$ version here for simplicity.

To further support this hypothesis, we conduct experiments on a synthetic Gaussian problem (\cref{fig:toy} (a); details in \cref{sec:toy-dataset}), where we can easily calculate $p^*(y \mid \vx)$ for each sample.
We first generate several different $\vp_\tar$ by adding noise\footnote{Whenever we mention adding noise to $\vp^*$, we mean we add independent noise to each dimension, and then re-normalize it to be a distribution. Large noise can thus flip the ``correct'' label.}
to the ground truth $\vp^*$,
then train simple 3-layer NNs under that supervision.
We also show five baselines: one-hot training (OHT), label smoothing (LS), KD, early-stopped KD (ESKD), and ground truth (GT) supervision (using $\vp^*$). 
KD refers to a teacher trained to convergence,
while ESKD uses a teacher stopped early based on validation accuracy.
We early-stop the student's training in all settings.
From \cref{fig:toy} (b-c), it is clear that smaller $\lVert \vp_\tar - \vp^* \rVert_2$ leads to better generalization performance, as measured either by accuracy (ACC) or expected calibration error (ECE)\footnote{ECE measures the calibration of a model \citep{guo2017calibration}. Briefly, lower ECE means the model's confidence in its predictions is more accurate. See \cref{sec:background} for details.} on a held-out test set. \cref{sec:toy-dataset} has more detailed results.

\begin{figure}[t!]
    \begin{subfigure}{.3\linewidth}
    \centering
    \includegraphics[width=\linewidth]{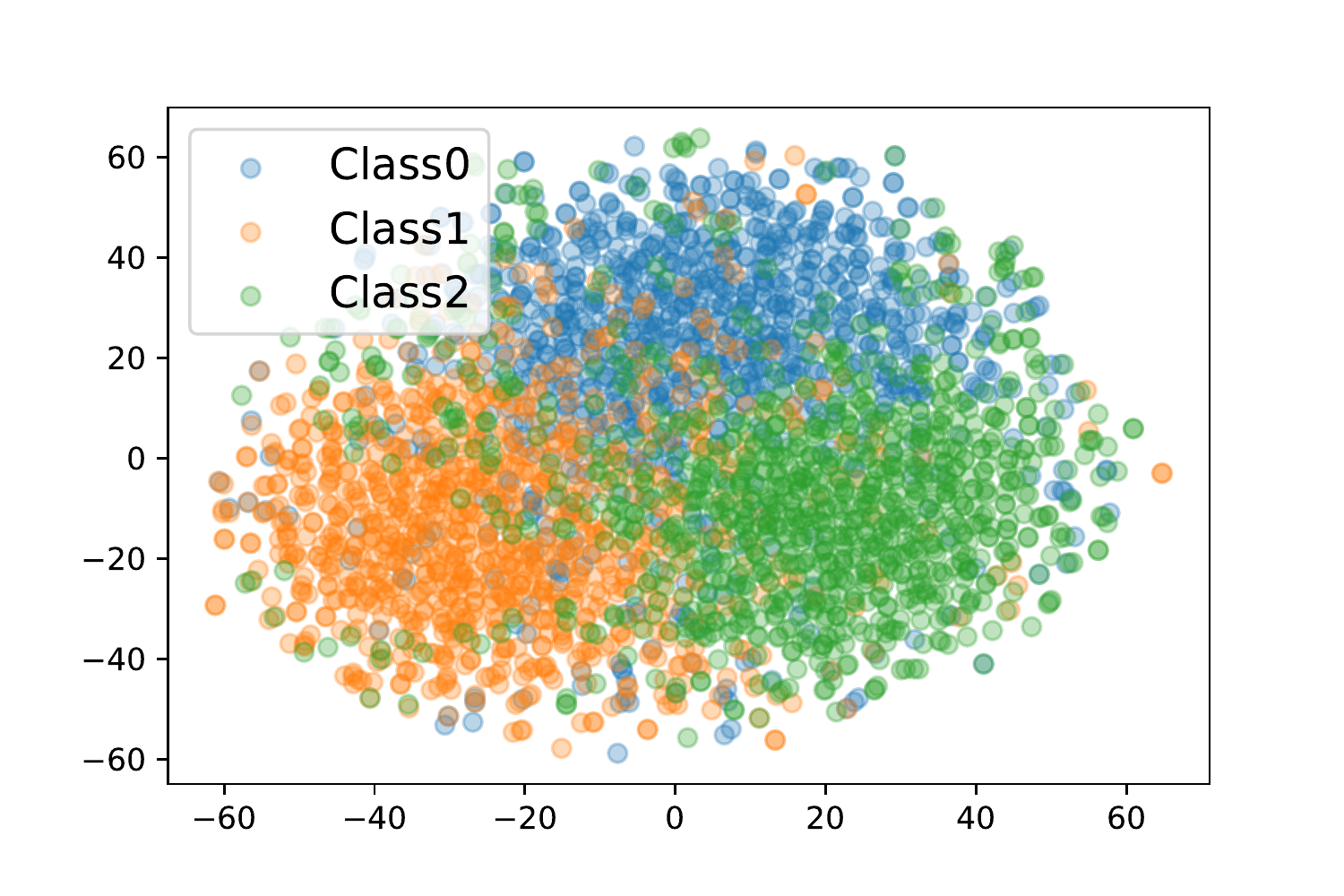}
    \caption{tSNE of toy dataset.}
    \end{subfigure}
    \begin{subfigure}{.3\linewidth}
    \centering
    \centering
    \includegraphics[width=\linewidth]{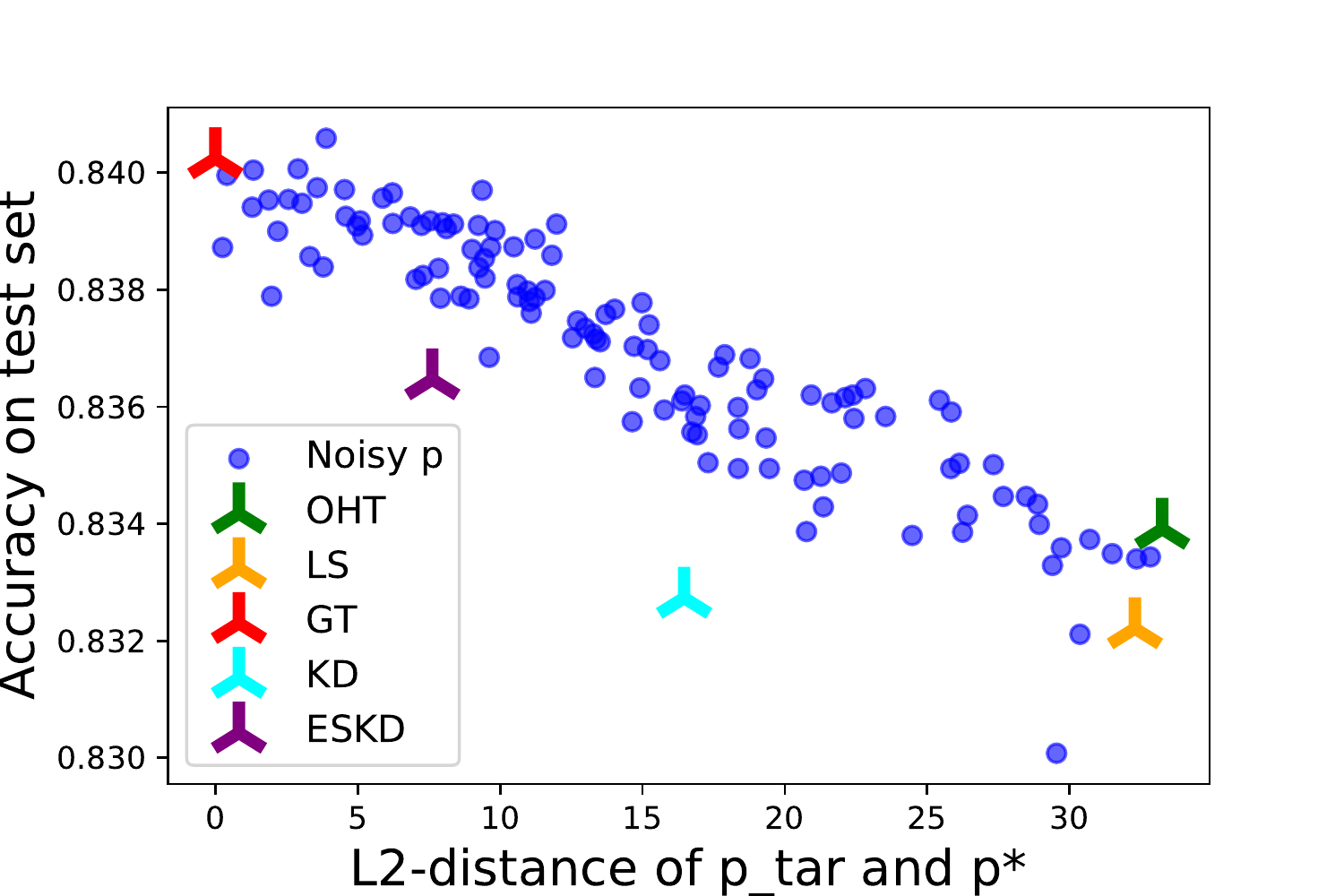}
    \caption{ACC vs $\|\vp_\tar-\vp^*\|_2$}
    \end{subfigure}
    \begin{subfigure}{.3\linewidth}
    \centering
    \includegraphics[width=\linewidth]{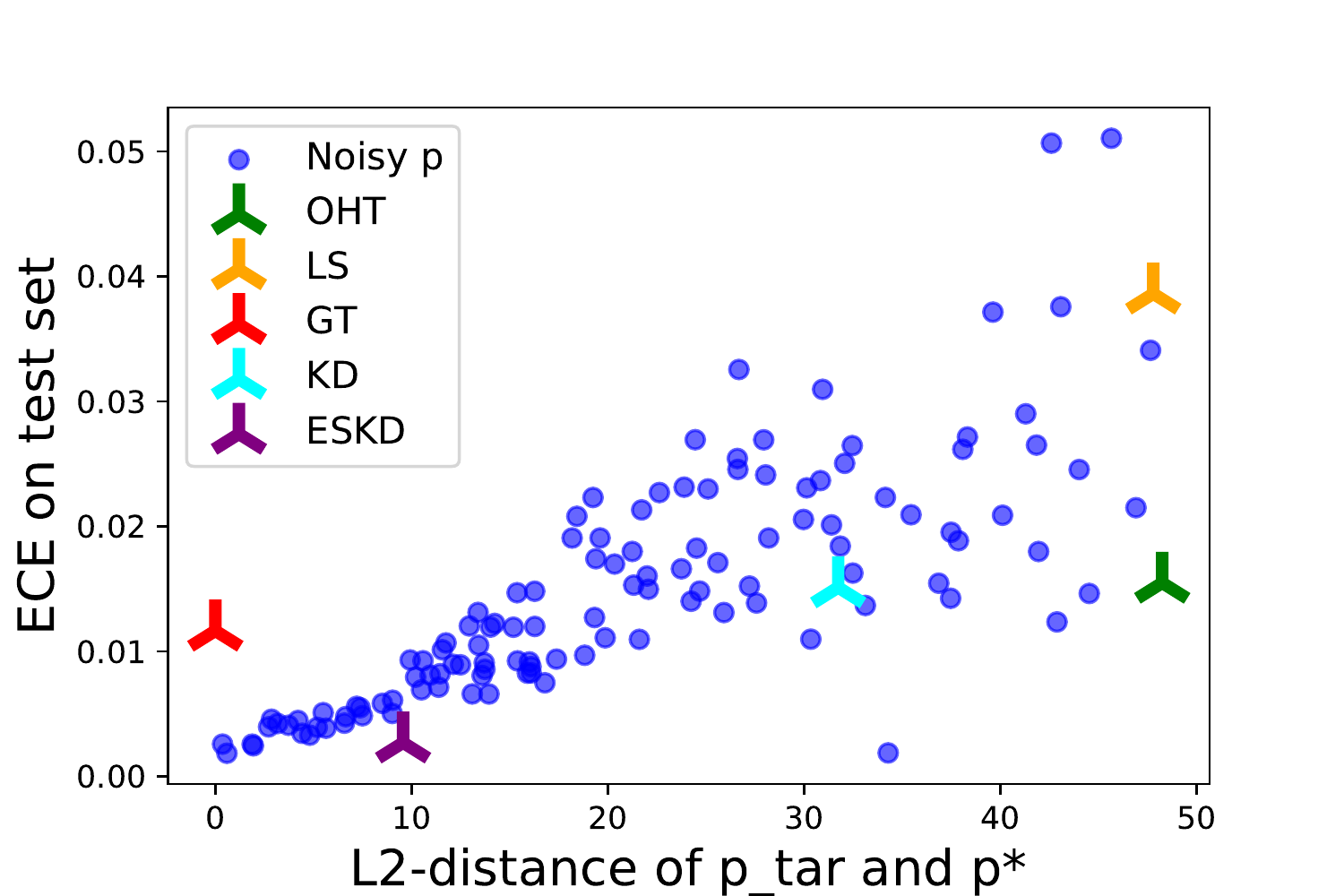}
    \caption{ECE v.s. $\|\vp_\tar-\vp^*\|_2$}
    \end{subfigure}
\centering
\caption{Experiments on a toy dataset when learning from different $\vp_\tar$. In (b-c), the horizontal axis represents $\lVert \vp_\tar - \vp^* \rVert_2$, and the vertical axis is the generalization performance. OHT means one-hot training (on $R_\text{emp}$), LS means label smoothing, GT means ground truth training with $\vp^*$, KD is knowledge distillation, and ESKD is early-stopped KD. The Spearman correlation coefficient for results in (b) is -0.930 with p-value $1.9\times10^{-53}$; for (c) is 0.895 with p-value $2.7\times10^{-43}$.}
\label{fig:toy}
\end{figure}

\section{Insights from the learning path}
\label{sec:path-insights}
In the toy example of \cref{sec:supervision-generalization}, we see that ESKD outperforms other baselines in accuracy by a substantial margin (and all baselines are roughly tied in ECE).
We expect that supervision with smaller $\lVert \vp_\tar - \vp^* \rVert_2$ leads to better generalization performance,
but it is not clear how better $\vp_\tar$ emerges from when the teacher in ESKD is trained using one-hot labels.
This section will answer this, by observing the learning paths of training samples.

\subsection{Pay more attention to harder samples} \label{sec:difficulty}
For a finer-grained understanding of early stopping the teacher,
we would like to better understand how the teacher's predictions evolve in training.
Assuming \cref{hyp:l2-gen}, the main factor is $\E_{\vx} \lVert \vq(\vx) - \vp^*(\vx) \rVert_2$,
where $\vq$ is the teacher's output probability distribution.
We expect, though, that this term will vary for different $\vx$,
in part because some samples are simply more difficult to learn.
As a proxy for this, we define \textbf{base difficulty} as $\lVert \ve_{y} - \vp^*(\vx) \rVert_2$, which is large if:
\begin{itemize}[nosep,left=1em]
    \item $\vx$ is ambiguous: $\vp^*$ has several large components, so there is no one-hot label near $\vp^*$.
    \item $\vx$ is not very ambiguous (there is a one-hot label near $\vp^*$), but the sample was ``unlucky'' and drew $y$ from a low-probability class.
\end{itemize}

\begin{figure}[t]
    \centering
    \includegraphics[width=0.9\textwidth]{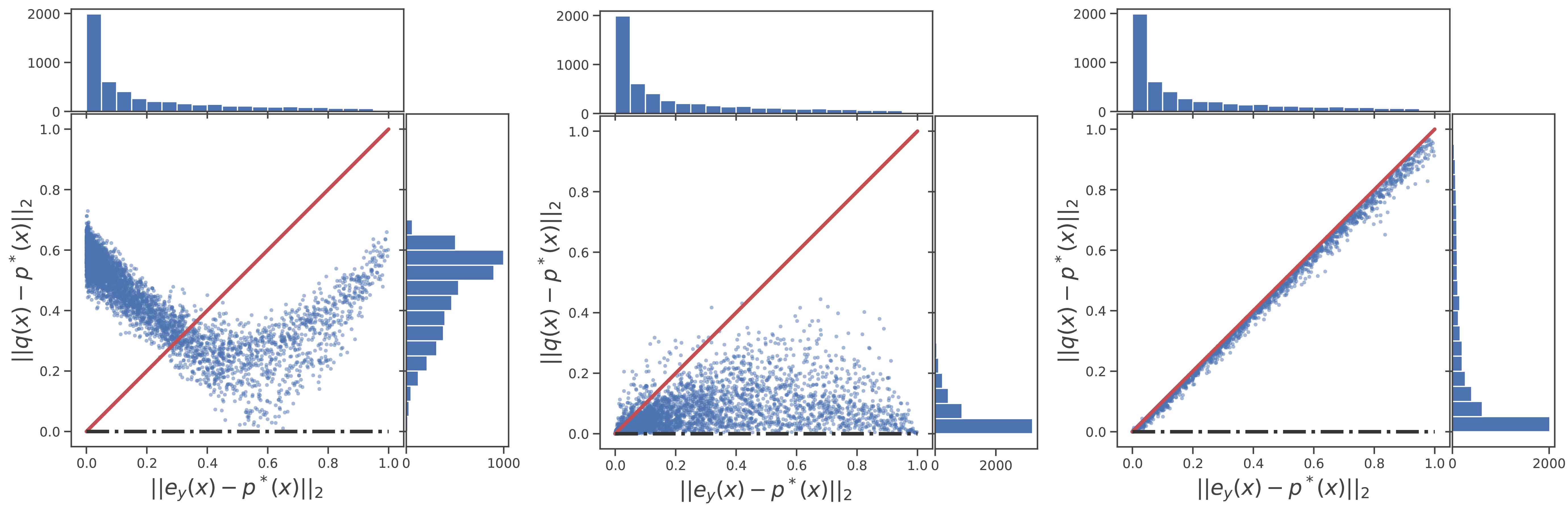}
    \caption{Normalized (divided by $\sqrt{2}$) distance between output distribution $\vq$ and $\vp^*$ during the one-hot training in different stages (left to right: initial, early stop, convergence). In these figures, $\E_{\vx} \|\vq(\vx)-\vp^*(\vx)\|_2$ of \cref{hyp:l2-gen} is the mean height of all points in the figure. We provide more results about the NNs trained under different supervisions using this fashion in \cref{sec:distance_gap}.}
    \label{fig:difficulty_combine}
\end{figure}

\Cref{fig:difficulty_combine} shows these two quantities at three points in training: initialization, the point where ESKD's teacher stops, and convergence.
At initialization, most points%
\footnote{The curve structure is expected: points with $\vp^* \approx (\frac13, \frac13, \frac13)$ are near the middle of the base difficulty range, and all points are initialized with fairly ambiguous predictions $\vq$.} have large $\lVert \vq(\vx) - \vp^*(\vx) \rVert_2$.
By the point of early stopping, most $\vq(\vx)$ values are roughly near $\vp^*$.
At convergence, however, $\vq(\vx) \approx \ve_{y}$, as the classifier has nearly memorized the training inputs, leading to a diagonal line in the plot.

It is clear the biggest contributors to $\E_{\vx} \lVert \vq(\vx) - \vp^*(\vx) \rVert_2$ when training a model to convergence are the points with high base difficulty.
Per \cref{hyp:l2-gen}, these are the points we should most focus on.

\subsection{Learning Path of Different Samples}
To better understand how $\vq$ changes for each sample, we track the model's outputs on all training samples at each training step.
\Cref{fig:zig_zag_3} shows four samples with different base difficulty, with the vectors of three probabilities plotted as points on the simplex (details in \cref{sec:background}). 

The two easy samples very quickly move to the correct location near $\ve_y$ (as indicated by the light color until reaching the corner).
The medium sample takes a less direct route, drifting off slightly towards $\vp^*$, but still directly approaches $\ve_y$.
The hard sample, however, does something very different:
it first approaches $\vp^*$,
but then veers off towards $\ve_y$,
giving a ``zig-zag'' path.
In both the medium and hard cases, there seems to be some ``unknown force'' dragging the learning path towards $\vp^*$;
in both cases, the early stopping point is not quite perfect, but is noticeably closer to $\vp^*$ than the final converged point near $\ve_y$.
In other words, during one-hot training, the NNs can \emph{spontaneously refine the ``bad labels.''}
Under \cref{hyp:l2-gen}, this partly explains the superior performance of ESKD to KD with a converged teacher.\footnote{KD's practical success is also related to the temperature and optimization effects, among others; better supervision is not the whole story. We discuss the effect of various hyperparameters in \cref{sec:temperature}.}

These four points are quite representative of all samples under different toy-dataset settings.
In \cref{sec:how_rep_zigzag},
we also define a ``zig-zagness score'' to numerically summarize the learning path shape,
and show it is closely correlated to base difficulty.

\begin{figure}[t]
    \centering
    \includegraphics[width=0.9\textwidth]{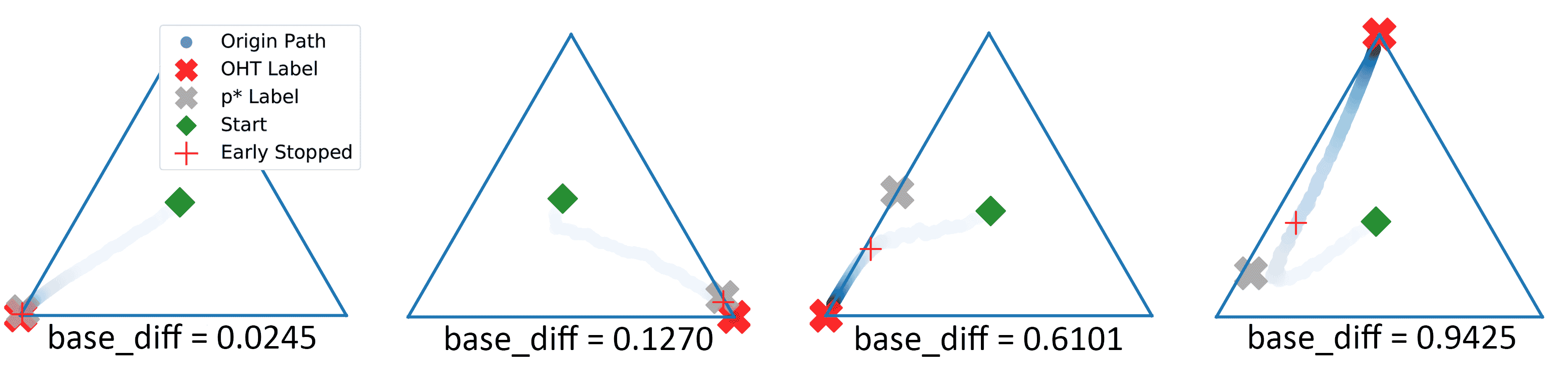}
    \caption{Learning path of samples with different base difficulty. Corners correspond to one-hot vectors. Colors represent training time: transparent at initialization, dark blue at the end of training.}
    \label{fig:zig_zag_3}
\end{figure}

\subsection{Explanation of patterns in the learning path} \label{sec:expl-patterns}
The following decomposition will help explain the
``unknown force'' pushing $\vq$ towards $\vp^*$.

\begin{prop} \label{prop:decomposition}
Let $\vz^t(\vx) \triangleq f(\vw^t, \vx)$ denote the network output logits with parameters $\vw^t$,
and $\vq^t(\vx) = \operatorname{Softmax}(\vz^t(\vx))$ the probabilities.
Let $\vw^{t+1} \triangleq \vw^t - \eta \, \nabla_{\vw}\left( \vp_\tar(\xu)\tp \vL(\vq^t(\xu)) \right)$
be the result of applying one step of SGD to $\vw^t$ using the data point $(\xu, \vp_\tar(\xu))$ with learning rate $\eta$.
Then the change in network predictions for a particular sample $\xo$ is
\[
    \vq^{t+1}({\xo}) - \vq^{t}(\xo)
    = \eta \,
      \mathcal{A}^t(\xo) \;
      \mathcal{K}^t(\xo, \xu) \,
      \left( \vp_\tar(\xu) - \vq^t(\xu) \right)
      + \mathcal{O}(\eta^2 \lVert \nabla_{\vw} \vz(\xu) \rVert_\mathrm{op}^2)
,\] where
$\mathcal{A}^t(\xo) = \nabla_{\vz} \vq^t(\xo)$
and
$\mathcal{K}^t(\xo, \xu) = \left(\nabla_{\vw} \vz(\xo)|_{\vw^t} \right) \left( \nabla_{\vw} \vz(\xu)|_{\vw^t} \right)\tp$
are $K \times K$ matrices.
\end{prop}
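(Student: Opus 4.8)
The plan is to treat $\vq^{t+1}(\xo) - \vq^{t}(\xo)$ as the change in a smooth function of the parameters under the small perturbation $\Delta\vw \triangleq \vw^{t+1} - \vw^t$, and to expand it to first order in $\eta$. First I would write the SGD update explicitly as $\Delta\vw = -\eta\,\nabla_\vw\!\left(\vp_\tar(\xu)\tp \vL(\vq^t(\xu))\right)$ and Taylor-expand the prediction map $\vw \mapsto \vq(\vw, \xo)$ about $\vw^t$, giving
\[
\vq^{t+1}(\xo) - \vq^t(\xo) = \left(\nabla_\vw \vq^t(\xo)\right)\Delta\vw + \mathcal{O}\!\left(\lVert \Delta\vw\rVert^2\right).
\]
The first-order term is then factored by the chain rule through the logits: since $\vq^t(\xo) = \operatorname{Softmax}(\vz^t(\xo))$, we have $\nabla_\vw \vq^t(\xo) = \left(\nabla_\vz \vq^t(\xo)\right)\left(\nabla_\vw \vz(\xo)|_{\vw^t}\right) = \mathcal{A}^t(\xo)\,\nabla_\vw \vz(\xo)$.

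The key computational step is evaluating the gradient of the loss at $\xu$. Using the chain rule again, $\nabla_\vw\!\left(\vp_\tar(\xu)\tp \vL(\vq^t(\xu))\right) = \left(\nabla_\vw \vz(\xu)|_{\vw^t}\right)\tp\,\nabla_\vz\!\left(\vp_\tar(\xu)\tp\vL(\vq^t(\xu))\right)$, so everything reduces to the logit-space gradient $\nabla_\vz\!\left(\vp_\tar\tp\vL(\vq)\right)$. For the softmax cross-entropy loss this collapses cleanly: componentwise $\nabla_\vz L(k, \vq) = \vq - \ve_k$, so summing against $\vp_\tar$ and using $\sum_k p_{\tar,k} = 1$ yields the identity $\nabla_\vz\!\left(\vp_\tar\tp\vL(\vq^t(\xu))\right) = \vq^t(\xu) - \vp_\tar(\xu)$. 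Hence $\Delta\vw = \eta\,\left(\nabla_\vw \vz(\xu)|_{\vw^t}\right)\tp\!\left(\vp_\tar(\xu) - \vq^t(\xu)\right)$.

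Substituting this back and grouping the two logit-Jacobian factors into $\mathcal{K}^t(\xo,\xu) = \left(\nabla_\vw \vz(\xo)|_{\vw^t}\right)\left(\nabla_\vw \vz(\xu)|_{\vw^t}\right)\tp$ gives exactly the stated leading term $\eta\,\mathcal{A}^t(\xo)\,\mathcal{K}^t(\xo,\xu)\!\left(\vp_\tar(\xu) - \vq^t(\xu)\right)$. To justify the remainder, I would bound $\lVert \Delta\vw\rVert^2 \le \eta^2\,\lVert \nabla_\vw \vz(\xu)\rVert_{\mathrm{op}}^2\,\lVert \vp_\tar(\xu) - \vq^t(\xu)\rVert^2$ and note that $\vp_\tar(\xu) - \vq^t(\xu)$ is a difference of simplex vectors, hence has norm at most a constant, which can be absorbed into the $\mathcal{O}(\eta^2\lVert \nabla_\vw \vz(\xu)\rVert_{\mathrm{op}}^2)$ term.

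The main obstacle is less any single hard estimate than careful bookkeeping: pinning down the Jacobian and transpose conventions so that the $K\times K$ matrices $\mathcal{A}^t$ and $\mathcal{K}^t$ compose in the correct order and yield a vector of the right shape, and being explicit that the clean residual $\vp_\tar - \vq$ depends on using the softmax cross-entropy loss (the matching loss for softmax); for a generic loss the logit-space gradient would not reduce to $\vq - \vp_\tar$, so I would state that assumption up front. I would also confirm that the second-order Taylor remainder, which formally involves the parameter-Hessian of $\vq(\vw,\xo)$, is genuinely $\mathcal{O}(\eta^2)$ with a constant controlled by $\lVert \nabla_\vw \vz(\xu)\rVert_{\mathrm{op}}^2$ under standard smoothness assumptions on $f$.
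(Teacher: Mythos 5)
Your proposal is correct and follows essentially the same route as the paper's proof in the appendix: a first-order Taylor expansion of $\vq(\vw,\xo)$ in $\Delta\vw$, the chain rule through the logits to factor out $\mathcal{A}^t$ and the two logit Jacobians forming $\mathcal{K}^t$, the identity $\nabla_{\vz}\big(\vp_\tar\tp\vL(\vq)\big)=\vq-\vp_\tar$ for softmax cross-entropy, and the boundedness of $\vp_\tar-\vq^t$ to control the remainder. Your explicit remark that the clean residual $\vp_\tar-\vq$ hinges on the loss being the softmax cross-entropy is a point the paper's proof uses only implicitly, and is worth stating.
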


The matrix $\mathcal K^t$ is the \emph{empirical neural tangent kernel},
which we can think of roughly as a notion of ``similarity'' between $\xo$ and $\xu$
based on the network's representation at time $t$,
which can
change during training in potentially complex ways.
In very wide networks, though,
$\mathcal K^t$ is nearly invariant throughout training and nearly independent of the initialization
\citep{NTK,arora2019exact}.
The gradient norm can be controlled by gradient clipping, or bounded with standard SGD analyses when optimization doesn't diverge.
\Cref{sec:ntk-model} has more details and the proof.

\Cref{fig:each_gradient} shows the learning path of a hard sample during one-hot training,
say $\xo$, where the label $\ve_{\yo}$ is far from $\vp^*(\xo)$.
In each epoch, $\vw$ will receive $N-1$ updates based on the loss of $\xu \ne \xo$ (the small blue path),
and one update based on $\xo$ (the big red path).
At any time $t$,
``dissimilar'' samples will have small $\mathcal{K}^t(\xo, \xu)$
(as measured, e.g., by its trace),
and hence only slightly affect the predicted label for $\xo$.
Similar samples will have large $\mathcal{K}^t(\xo, \xu)$, and hence affect its updates much more;
because $\vp^*$ is hopefully similar for similar $\vx$ values,
it is reasonable to expect that the mean of $\ve_y$ for data points with similar $\vx$ will be close to $\vp^*(\xo)$.
Thus the net effect of updates for $\xu \ne \xo$ should be to drag $\vq(\xo)$ towards $\vp^*(\xo)$.
This is the ``unknown force'' we observed earlier.

\begin{wrapfigure}{r}{0.5\textwidth}
    \centering
    \includegraphics[width=0.5\textwidth]{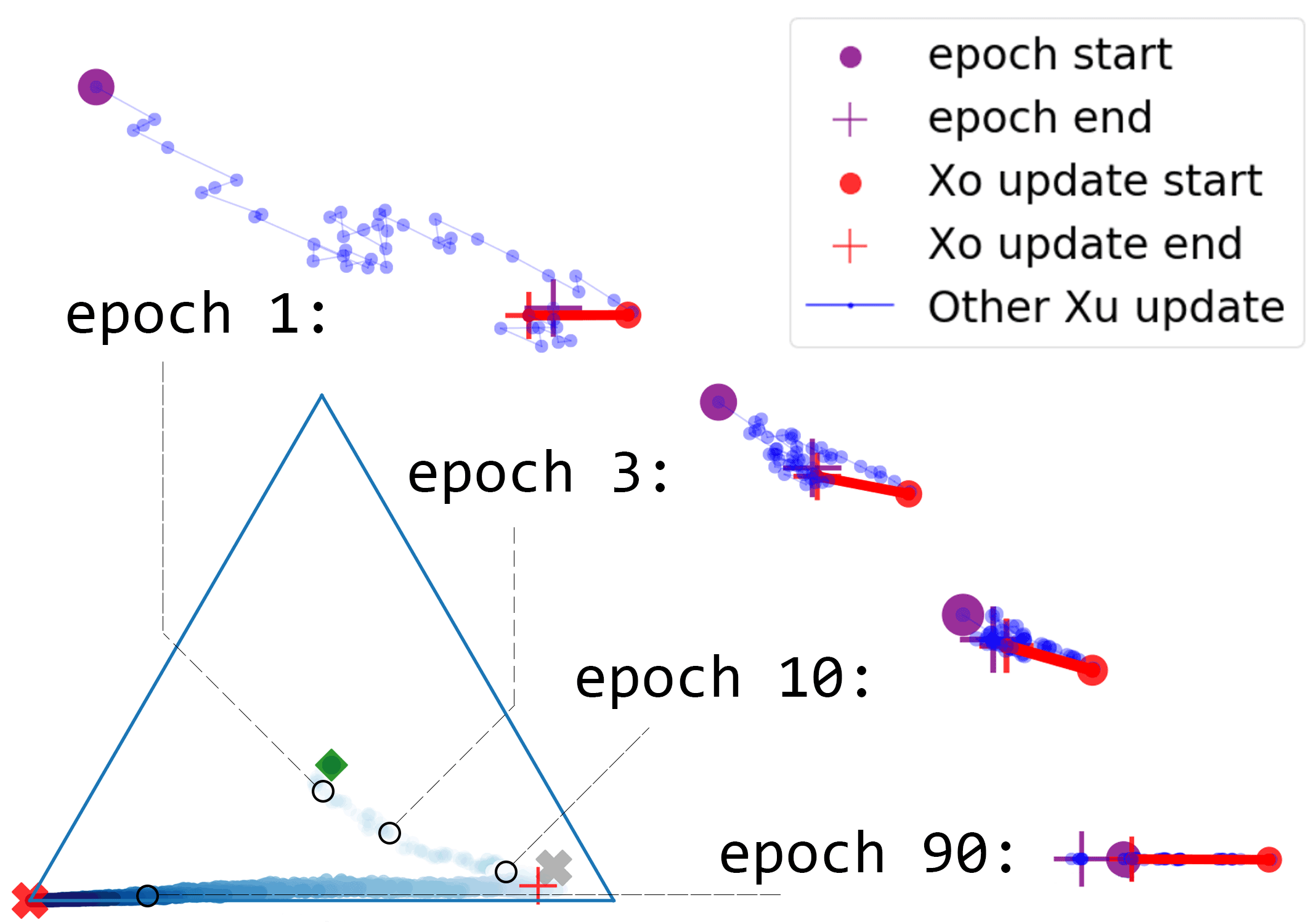}
    \caption{Updates of $\vq(\xo)$ over training.}
    \label{fig:each_gradient}
\end{wrapfigure}

The other force affecting $\vq(\xo)$ during training is, of course,
the update based on $\xo$, driving $\vq(\xo)$ towards $\ve_{\yo}$.
How do these two forces balance over the course training?
Early on,
$\lVert \ve_{\yu} - \vq^t(\xu) \rVert_2$
is relatively large for nearly any $\xu$,
because near initialization $\vq^t(\xu)$ will be relatively flat.
Hence the size of the updates for ``similar'' $\xu$ and the update for $\xo$ should be comparable,
meaning that, if there are at least a few ``similar'' training points,
$\vq^t(\xo)$ will move towards $\vp^*(\xo)$.
Throughout training,
some of these similar samples will become well-classified,
so that $\lVert \ve_{\yu} - \vq^t(\xu) \rVert_2$ becomes small,
and their updates will no longer exert much force on $\vq(\xo)$.
Thus, the $\xo$ updates begin to dominate,
causing the zig-zag pattern as the learning path turns towards $\ve_{\yo}$.
For easy samples, where $\vp^*$ and $\ve_{\yo}$ are in the same direction, these forces agree and lead to fast convergence.
On samples like the ``medium'' point in \cref{fig:zig_zag_3},
the two forces broadly agree early on, but the learning path deviates slightly towards $\vp^*$ en route to $\ve_{\yo}$.

\citet{liu2020early} prove that a similar pattern occurs while training a particular linear model on data with some mislabeled points, and specifically that stopping training at an appropriate early point will give better predictions than continuing training.

\section{Learning paths on real tasks} \label{sec:real-tasks}
Our analysis in \cref{sec:supervision-generalization,sec:path-insights} demonstrate that the model can spontaneously refine the bad labels during one-hot training: 
the zig-zag learning path first moves towards the unknown $\vp^*$.
But is this also true on real data, for more complex network architectures?
We will now show that they do,
although seeing the patterns requires a little more subtlety.

\begin{figure}[t]
    \centering
    \includegraphics[width=\textwidth]{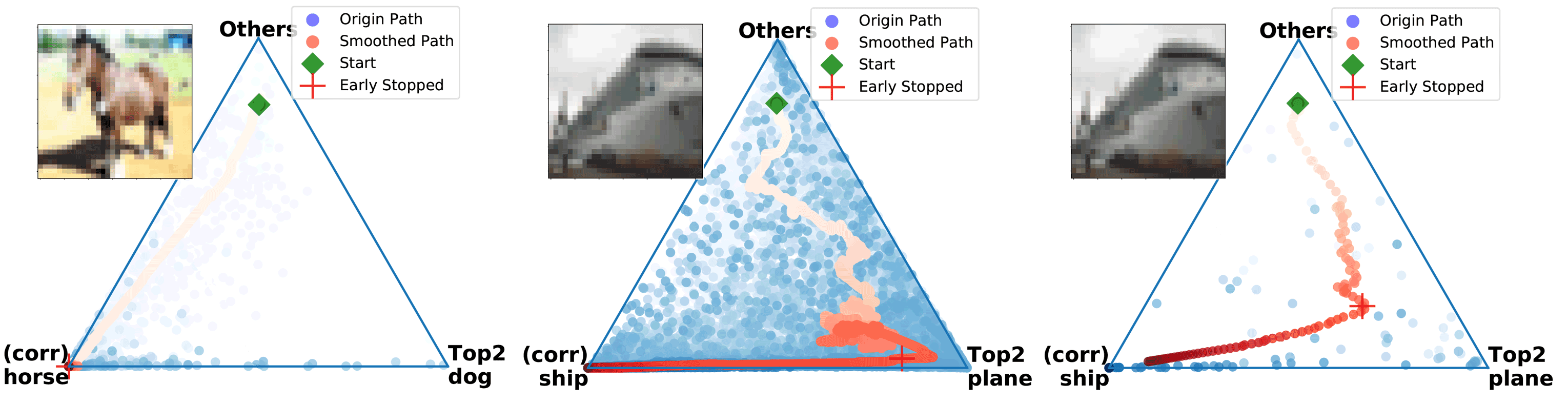}
    \caption{Learning path on CIFAR10. In the first two panels we record $\vq^t$ for each batch, while in the last panel we record it for each epoch. See \cref{sec:how_rep_zigzag} for the learning paths of more samples.}
    \label{fig:path_cifar10}
\end{figure}

We visualize the learning path of data points while training a ResNet18 \citep{resnet} on CIFAR10 for 200 epochs as an example.
The first panel of \cref{fig:path_cifar10} shows the learning path of an easy sample.%
\footnote{Without knowing $\vp^*$, we instead use zig-zag score -- see \cref{sec:how_rep_zigzag} -- to measure the difficulty.}
We can see that $\vq^t$ converges quickly towards the left corner,
the one-hot distribution for the correct class,
because the color of the scattered points in that figure are quite light.
At the early stopping epoch, $\vq^t$ has already converged to $\ve_y$.
However, for a hard sample, it is very difficult to observe any patterns from the raw path (blue points):
there are points almost everywhere in this plot.
This is likely caused by the more complex network and dataset,
as well as a high learning rate in early training.
To find the hidden pattern in this high-variance path,
we treat $\vq^t$ as a time series signal with many high-frequency components.
Thus we can collect its low-frequency components via a low-pass filter and then draw that, i.e.,
taking a exponential moving average (EMA) on $\vq^t$ (red points).
This makes it clear that, overall, the pattern zig-zags,
first moving towards the unknown true label before eventually turning to memorize the wrong label.

This filtering method not only helps us observe the zig-zag pattern,
but can also refine the labels.
We use the following two experiments to verify this.
First, we train the model on CIFAR10H using one-hot labels.
As CIFAR10H provides $\vp_\text{hum}$, which can be considered as an approximation of $\vp^*$,
we track the mean distance between $\vq$ and $\vp_\text{hum}$ during training.
In the first panel of \Cref{fig:fix_wrong_label}, which averages the distance of all 10k training samples,
the difference between the blue $\vq_\text{kd}$ curve and red $\vq_\text{filt}$ curve is quite small.
However, we can still observe that the red curve is lower than the blue one before overfitting:
filtering can indeed refine the labels.
This trend is more significant when considering the most difficulty samples,
as in the second panel: the gap between curves is larger.

To further verify the label refinement ability of filtering, we randomly choose 1,000 of the 50,000 CIFAR10 training samples, 
and flip their labels to a different $y' \ne y$.
The last panel in \cref{fig:fix_wrong_label} tracks how often the most likely prediction of the network, $\operatorname{argmax} \vq^t$,
recovers the true original label, rather than the provided random label, for the corrupted data points.
At the beginning of the model's training, the initialized model randomly guesses labels, getting about 10\% of the 1,000 flipped labels correct.
During training, the model spontaneously corrects some predictions,
as the learning path first moves towards $\vp^*$.
Eventually, though, the model memorizes all its training labels.
Training with the predictions from the 400th epoch corresponds to standard KD (no corrected points).
Early stopping as in ESKD would choose a point with around 70\% of labels corrected.
The filtered path, though, performs best, with over 80\% corrected.

\begin{figure}[t]
    \centering
    \includegraphics[width=\textwidth]{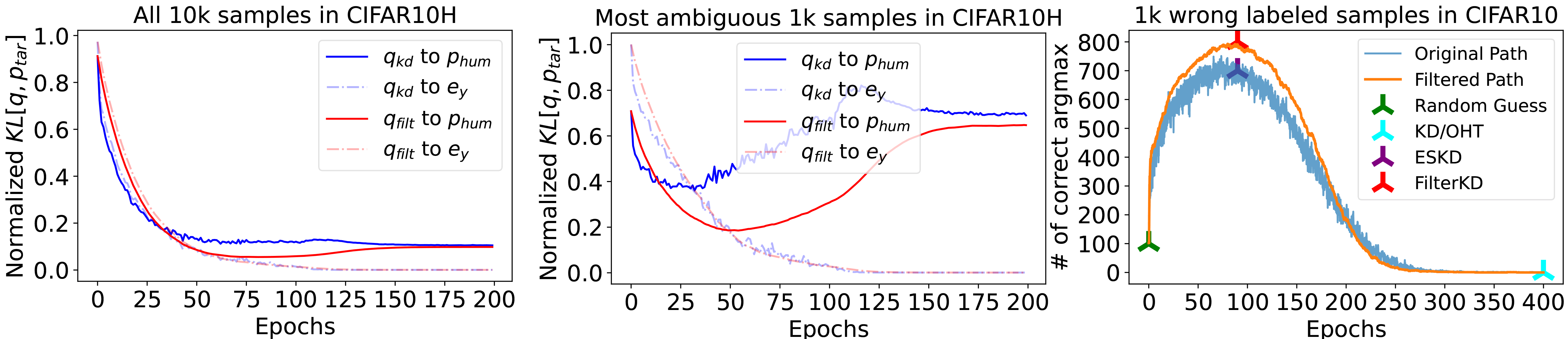}
    \caption{Filtering can refine the labels in both clean and noisy label case.}
    \label{fig:fix_wrong_label}
\end{figure}

\section{Filtering as a method for knowledge distillation}

\Cref{fig:fix_wrong_label} gives a clear motivation for a new knowledge distillation method,
which we call Filter-KD (\cref{alg:filter-kd}):
train the student from the smoothed predictions of the teacher network.
Specifically, we maintain a look-up table $\vq_\text{smooth}\in\mathbb{R}^{N\times K}$ to store a moving average of $\vq_t$ for each training sample.
Note that in one epoch, each $\vq_\text{smooth}(\vx_n)$ will be updated only once. 
We check the early stopping criterion with the help of a validation set.
Afterwards, the teaching supervision $\vq_\text{smooth}$ is ready, and we can train a student network under its supervision.
This corresponds to using a moving average of the teacher model ``in function space,'' i.e.\ averaging the outputs of the function over time.

Compared to ESKD,
Filter-KD can avoid the extremely high variance of $\vq^t$ during training.
Unlike \cref{fig:path_cifar10}, which plots $\vq^t$ after every iteration of training,
most practical implementations only consider early-stopping at the end of each epoch.
This is equivalent to down-sampling the noisy learning path (as in the last panel of \cref{fig:path_cifar10}),
further exacerbating the variance of $\vq^t$.
Thus ESKD will likely select a bad $\vp_\tar$ for many data points.
Filter-KD, by contrast, has much more stable predictions.

We will show the effectiveness of this algorithm shortly, but first we discuss its limitations.
Compared to ESKD, the running time of Filter-KD might be slightly increased.
Furthermore, compared to the teaching model in ESKD, the Filter-KD requires a teaching table $\vq_\text{smooth}$, which will require substantial memory when the dataset is large.
One alternative avoiding the need for this table would be to instead take an average ``in parameter space,'' like e.g.\ momentum parameter updating as in \citet{momentum1}.
We empirically find that, although this helps the model converge faster, it does not lead to a better teacher network; see \cref{sec:filter-params}.
Thus, although Filter-KD has clear drawbacks, we hope that our explanations here may lead to better practical algorithms in the future.

Similarly inspired by this spontaneous label refining mechanism (or early stopping regularization), \citet{liu2020early} and \citet{KD_denoise} each propose algorithms aiming at the noisy label problem. We discuss the relationship between these three methods in \cref{sec:alg_compare}.

\subsection{Quantitative results of Filter-KD}

\begin{figure}[t]
    \centering
    \includegraphics[width=\textwidth]{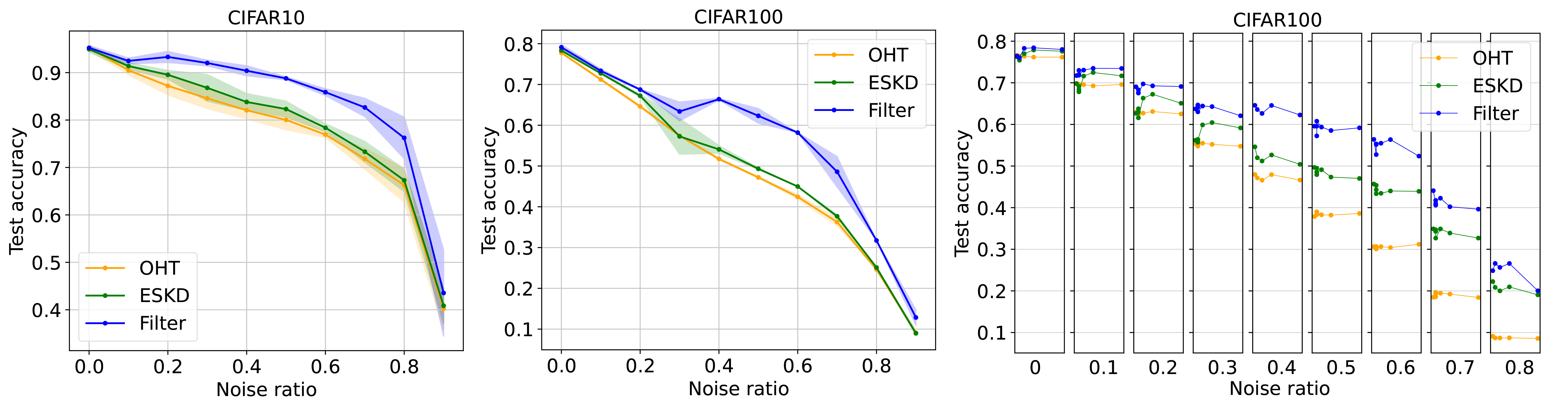}
    \caption{Test accuracy under different noise ratio $\sigma$. Solid lines are the means while shade region are the standard errors for 3 runs with different random seeds (shaded range is the standard error). Last panel compare the influence of different temperatures. Each thin rectangle plot represents a different $\sigma=\{0,0.1,...,0.8\}$, in which we plot the results with different $\tau=\{0.5,1,2,4,10\}$.}
    \label{fig:noisy_acc}
\end{figure}

We now compare the performance of Filter-KD and other baselines on a real dataset. 
We focus on self-distillation and a fixed temperature $\tau=1$ (except \cref{tab:big_teacher} and last panel in \cref{fig:noisy_acc}), 
as we want the only difference among these methods to be $\vp_\text{tar}$. 
Thus we can conclude the improvement we achieved comes purely from the refined supervision.
See \cref{sec:temperature} for other temperatures.

\begin{table}[t] %
  \centering
  \begin{minipage}[t]{.45\textwidth}
    \centering
    \begin{tabular}{c|cccc}
    \hline
    Noise $\sigma$ & 0\% & 5\% & 10\% & 20\% \\ \hline
    \textbf{OHT} & 56.95 & 53.02 & 52.02 & 30.52 \\
    \textbf{ESKD} & 58.61 & 53.53 & 52.99 & 36.55 \\
    \!\!\textbf{FilterKD}\!\! & 59.32 & 56.43 & 55.51 & 40.81 \\ \hline
    \end{tabular}
    \captionof{table}{Results on TinyImageNet dataset.}
    \label{tab:tinyimagenet}
  \end{minipage}
  \quad
  \begin{minipage}[t]{.45\textwidth}
    \centering
    \begin{tabular}{c|ccc}
    \hline
         & \!\!Eff$\rightarrow$Res\!\! & \!\!Res$\rightarrow$Mob\!\! & \!\!Res$\rightarrow$VGG\!\! \\ \hline
    \textbf{Teacher}    &86.83 & 77.23 & 77.98  \\
    \textbf{Student}    &78.09 & 72.58 & 71.04  \\
    \textbf{ESKD}       &81.16 & 73.13 & 72.64  \\
    \textbf{FilterKD}   &83.03 & 75.79 & 74.49  \\ \hline
    \end{tabular} %
    \captionof{table}{Teacher$\rightarrow$student, on CIFAR100.}
    \label{tab:big_teacher}
  \end{minipage}
\end{table}

\begin{table}[ht]
    \centering
    \begin{tabular}{@{}c|cccc|cccc@{}}
    \toprule
             & \multicolumn{4}{c|}{Accuracy}                                                & \multicolumn{4}{c}{ECE}              \\ %
             & \textbf{OHT}    &\textbf{KD}       & \textbf{ESKD}    & \textbf{FilterKD}& \textbf{OHT}    &\textbf{KD}       & \textbf{ESKD}    & \textbf{FilterKD}      \\ \midrule
    CIFAR10  & 95.34    & 95.39   & 95.42   & 95.63   & 0.026      & 0.027     & 0.027      & 0.007 \\
    CIFAR100 & 78.07  & 78.40   & 78.83   & 80.09   & 0.053      & 0.061     & 0.067      & 0.029 \\\bottomrule
    \end{tabular} %
    \caption{Quantitative comparison of generalization performance for ResNet18 self-distillation (mean value of 5 runs). Refer to \cref{tab:main_results_each_run} for detailed results.}
    \label{tab:main-results}
\end{table}

\begin{Ualgorithm}[ht]
	\begin{algorithmic}[H]
        \STATE \textbf{Input:} Dataset $\{(\vx_n,y_n)\}_{n=1}^{N}$, where $I_n=\{1,2,...,N\}$ is the index for each pair
        \STATE \textit{\# Train the teacher}
        \STATE Initialize a network model, initialize an $N\times K$ matrix called $\vq_\text{smooth}$
        \STATE Go through the entire dataset, calculate $\vq_\text{smooth}[n]$ = Softmax(model($\vx_n$))
		\FOR{$\mathit{epoch} = 1,2, ..., T$}
		    \FOR{$n \in \{1, 2, \dots, N\}$ in random order}
		        \STATE $\hat p$ = Softmax(model($\vx_n$))
		        \STATE $\vq_\text{smooth}[n] = (1-\alpha) \cdot \vq_\text{smooth}[n] + \alpha \cdot \hat p$
		        \STATE Update parameters based on $\mathit{loss}$ = CrossEntropy($\hat p, y_n$), 
		    \ENDFOR
		    \STATE Check the early stopping criterion; stop training if satisfied
		\ENDFOR
        \STATE \textit{\# Train the student}
        \STATE For each input $\vx_n$, set $\vp_\tar=\vq_\text{smooth}[n]$; train the network under the supervision of $\vp_\tar$
	\end{algorithmic}
	\caption{Filter-KD. $\alpha$ controls the cut-off frequency of low-pass filter (0.05 here).}
	\label{alg:filter-kd}
\end{Ualgorithm}

The first task we consider is noisy-label classification,
where we train and validate the model on a dataset with some labels randomly flipped.
After training, all the models are evaluated on a clean held-out test set.
The experiments are conducted on CIFAR (\cref{fig:noisy_acc}) and TinyImageNet (\cref{tab:tinyimagenet}),
under different noise ratios $\sigma$:
$\sigma=0.1$ means 10\% of the labels are flipped.
In \cref{fig:noisy_acc}, an interesting trend can be observed:
the enhancement brought by Filter-KD is not significant when $\sigma$ is too small or too large.
That is reasonable because for small $\sigma$, few samples have bad labels, thus the possible enhancement might not be large.
When $\sigma$ is too high,
the labels of similar points become less reliable,
and the learning path will no longer head as reliably towards $\vp^*$.
Thus, for very high noise ratios, the performance of Filter-KD decays back towards that of OHT.

Filter-KD also outperforms other methods in both accuracy and calibration on the clean dataset,
as illustrated in \cref{tab:main-results}.
This remains true in the more common KD case when the teacher network is bigger than the student;
see results in \cref{tab:big_teacher}.
Note that in \cref{tab:big_teacher}, we no longer keep $\tau=1$, 
as the baselines are more sensitive to $\tau$ when the teacher is large (see \cref{sec:temperature} for more discussion);
instead we optimize $\tau$ for each setting.
Here ``Eff'' is short for EfficientNet-B1 \citep{efficientnet}, ``Res'' is short for ResNet18, ``Mobi'' is short for MobileNetV2 \citep{mobilenet}, ``VGG'' is short for VGG-11 \citep{vgg}.

In summary, the explanation in \cref{sec:real-tasks} and experimental results in this section demonstrate that better supervisory signal, which can be obtained by Filter-KD, can enhance the prediction performance in both clean and noisy-label case.
(It is also possible that the algorithm improves performance for other reasons as well, especially in the clean label case;
the influence of temperature may be particularly relevant.)
\section{Related work} \label{sec:related-work}
\paragraph{Human label refinement}
\citet{human_label} use a distribution of labels obtained from multiple annotators to replace the one-hot label in training, and find that doing so enhances both the generalization performance of trained models and their robustness under adversarial attacks.
However, this approach is relatively expensive, as it requires more annotation effort than typical dataset creation techniques; several experienced annotators are required to get a good estimate of $\vp^*$.

\paragraph{Label smoothing}
Another popular method is label smoothing \citep{ls_first}, 
discussed in the previous sections. 
We believe that suitable smoothing can decrease $\|\vp_\tar-\vp^*\|_2$ to some extent, 
but the lower bound of this gap should be large, because label smoothing treats each class and each sample in the same way.
Much prior research \citep[e.g.][]{when_ls_help} shows that KD usually outperforms label smoothing as well.

\paragraph{KD and ESKD}
Knowledge distillation, by contrast, can provide a different $\vp_\tar$ for each training sample.
KD was first proposed by \citet{KD_initial} for model compression. %
Later, \citet{BAN,KD_earlystop,beyourownteacher,KD_poor_teacher} found that distillation can help even in ``self-distillation,'' when the teacher and student have identical structure.
Our work applies to both self-distillation and larger-teacher cases.

Another active direction in research about KD is finding explanations for its success, which is often still considered somewhat mysterious.
\citet{tang2020understanding} decompose the ``dark knowledge'' provided by the teacher into three parts: uniform, intra-class and inter-class.
This explanation also overlaps with ours: if we observe the samples with different difficulty in the same class, we are discussing intra-class knowledge; if we observe samples with two semantically similar classes, we then have inter-class knowledge.
Among previous work, the most relevant study is that of \citet{KD_probability}.
Our \cref{hyp:l2-gen} is suggested by their main claim, which focuses on the question of \textit{what is a good $\vp_\tar$}. Our work builds off of theirs by helping explain \textit{why this good $\vp_\tar$ emerges, and how to find a better one}. Besides, by looking deeper into the learning path of samples with different difficulty, our work might shed more light on KD, anti-noisy learning, supervised learning, overfitting, etc.

\paragraph{Noisy label task} Learning useful information from noisy labels is a long-standing task \citep{noisy_initial, noisy_initial2}. 
There are various ways to cope with it; for instance, \citet{noisy_gd_clipping} use gradient clipping, 
\citet{noisy_back_correction} use loss correction, 
\citet{KD_denoise} change the supervision during training, 
and \citet{noisy_app1} employ extra information. 
It is possible to combine KD-based methods with traditional ones to further enhance performance;
our perspective of learning paths also provides further insight into why KD can help in this setting.
\section{Discussion}
\label{sec:discussion}
In this paper, we first claim that better supervision signal, i.e., $\vp_\tar$ that is closer to $\vp^*$, leads to better generalization performance; this is supported by results of \citet{KD_probability} and further empirical suggestions given here.
Based on this hypothesis, we explain why LS and KD outperform one-hot training using experiments on a toy Gaussian dataset.
To further understand how such better supervision emerges, we directly observe the behavior of samples with different difficulty by projecting them on a 2-D plane.
The observed zig-zag pattern is well-explained by considering the tradeoff between two forces, one pushing the prediction to be near that of similar inputs,
the other pushing the prediction towards its training label;
we give an intuitive account based on \cref{prop:decomposition} of why this leads to the observed zig-zag pattern.

To apply these findings on real tasks, in which the data and network are more complex, we conduct low-pass filter on the learning path, and propose Filter-KD to further enhance $\vp_\tar$.
Experimental results on various settings (datasets, network structures, and label noise) not only show the advantage of the proposed method as a method for a teacher in knowledge distillation methods,
but also help verify our analysis of how generalization and supervision work in training neural network classifiers.

\section*{Acknowledgements}
This research was enabled in part by support provided by the Canada CIFAR AI Chairs program, WestGrid, and Compute Canada.

\bibliography{refs}

\begin{thebibliography}{34}
\providecommand{\natexlab}[1]{#1}
\providecommand{\url}[1]{\texttt{#1}}
\expandafter\ifx\csname urlstyle\endcsname\relax
  \providecommand{\doi}[1]{doi: #1}\else
  \providecommand{\doi}{doi: \begingroup \urlstyle{rm}\Url}\fi

\bibitem[Allen-Zhu \& Li(2020)Allen-Zhu and Li]{kd_understanding_long}
Zeyuan Allen-Zhu and Yuanzhi Li.
\newblock Towards understanding ensemble, knowledge distillation and
  self-distillation in deep learning.
\newblock \emph{arXiv preprint arXiv:2012.09816}, 2020.

\bibitem[Angluin \& Laird(1988)Angluin and Laird]{noisy_initial}
Dana Angluin and Philip Laird.
\newblock Learning from noisy examples.
\newblock \emph{Machine Learning}, 2\penalty0 (4):\penalty0 343--370, 1988.

\bibitem[Arora et~al.(2019)Arora, Du, Hu, Li, Salakhutdinov, and
  Wang]{arora2019exact}
Sanjeev Arora, Simon~S. Du, Wei Hu, Zhiyuan Li, Ruslan Salakhutdinov, and
  Ruosong Wang.
\newblock On exact computation with an infinitely wide neural net.
\newblock In \emph{Advances in Neural Information Processing Systems}, 2019.

\bibitem[Cho \& Hariharan(2019)Cho and Hariharan]{KD_earlystop}
Jang~Hyun Cho and Bharath Hariharan.
\newblock On the efficacy of knowledge distillation.
\newblock In \emph{Proceedings of the IEEE/CVF International Conference on
  Computer Vision}, pp.\  4794--4802, 2019.

\bibitem[Furlanello et~al.(2018)Furlanello, Lipton, Tschannen, Itti, and
  Anandkumar]{BAN}
Tommaso Furlanello, Zachary Lipton, Michael Tschannen, Laurent Itti, and Anima
  Anandkumar.
\newblock Born again neural networks.
\newblock In \emph{Proceedings of International Conference on Machine
  Learning}, pp.\  1607--1616. PMLR, 2018.

\bibitem[Gao et~al.(2018)Gao, Shen, Li, Yan, Wan, Lin, Loy, and
  Tang]{KD_2stage}
Mengya Gao, Yujun Shen, Quanquan Li, Junjie Yan, Liang Wan, Dahua Lin,
  Chen~Change Loy, and Xiaoou Tang.
\newblock An embarrassingly simple approach for knowledge distillation.
\newblock \emph{arXiv preprint arXiv:1812.01819}, 2018.

\bibitem[Guo et~al.(2017)Guo, Pleiss, Sun, and Weinberger]{guo2017calibration}
Chuan Guo, Geoff Pleiss, Yu~Sun, and Kilian~Q Weinberger.
\newblock On calibration of modern neural networks.
\newblock In \emph{International Conference on Machine Learning}, pp.\
  1321--1330. PMLR, 2017.

\bibitem[He et~al.(2016)He, Zhang, Ren, and Sun]{resnet}
Kaiming He, Xiangyu Zhang, Shaoqing Ren, and Jian Sun.
\newblock Deep residual learning for image recognition.
\newblock In \emph{Proceedings of the IEEE Conference on Computer Vision and
  Pattern Recognition}, pp.\  770--778, 2016.

\bibitem[He et~al.(2020)He, Fan, Wu, Xie, and Girshick]{MOCO}
Kaiming He, Haoqi Fan, Yuxin Wu, Saining Xie, and Ross Girshick.
\newblock Momentum contrast for unsupervised visual representation learning.
\newblock In \emph{Proceedings of the IEEE/CVF Conference on Computer Vision
  and Pattern Recognition}, pp.\  9729--9738, 2020.

\bibitem[Hinton et~al.(2015)Hinton, Vinyals, and Dean]{KD_initial}
Geoffrey Hinton, Oriol Vinyals, and Jeff Dean.
\newblock Distilling the knowledge in a neural network.
\newblock \emph{arXiv preprint arXiv:1503.02531}, 2015.

\bibitem[Huang et~al.(2020)Huang, Zhang, and Zhang]{KD_denoise}
Lang Huang, Chao Zhang, and Hongyang Zhang.
\newblock Self-adaptive training: beyond empirical risk minimization.
\newblock \emph{Advances in Neural Information Processing Systems}, 33, 2020.

\bibitem[Jacot et~al.(2018)Jacot, Gabriel, and Hongler]{NTK}
Arthur Jacot, Franck Gabriel, and Cl{\'e}ment Hongler.
\newblock Neural tangent kernel: Convergence and generalization in neural
  networks.
\newblock In \emph{Advances in Neural Information Processing Systems}, 2018.

\bibitem[Jiang et~al.(2021)Jiang, Zhang, Talwar, and Mozer]{c_score}
Ziheng Jiang, Chiyuan Zhang, Kunal Talwar, and Michael~C Mozer.
\newblock Characterizing structural regularities of labeled data in
  overparameterized models.
\newblock \emph{Proceedings of International Conference on Machine Learning},
  2021.

\bibitem[Kim et~al.(2019)Kim, Oh, Kim, Cho, and Yun]{grid_temp}
Taehyeon Kim, Jaehoon Oh, Nakyil Kim, Sangwook Cho, and Se-Young Yun.
\newblock Understanding knowledge distillation.
\newblock \emph{https://openreview.net/forum?id=tcjMxpMJc95}, 2019.

\bibitem[Liu et~al.(2020)Liu, Niles-Weed, Razavian, and
  Fernandez-Granda]{liu2020early}
Sheng Liu, Jonathan Niles-Weed, Narges Razavian, and Carlos Fernandez-Granda.
\newblock Early-learning regularization prevents memorization of noisy labels.
\newblock \emph{Advances in Neural Information Processing Systems}, 2020.

\bibitem[Menon et~al.(2019)Menon, Rawat, Reddi, and Kumar]{noisy_gd_clipping}
Aditya~Krishna Menon, Ankit~Singh Rawat, Sashank~J Reddi, and Sanjiv Kumar.
\newblock Can gradient clipping mitigate label noise?
\newblock In \emph{International Conference on Learning Representations}, 2019.

\bibitem[Menon et~al.(2021)Menon, Rawat, Reddi, Kim, and Kumar]{KD_probability}
Aditya~Krishna Menon, Ankit~Singh Rawat, Sashank Reddi, Seungyeon Kim, and
  Sanjiv Kumar.
\newblock A statistical perspective on distillation.
\newblock In \emph{International Conference on Machine Learning}, pp.\
  7632--7642. PMLR, 2021.

\bibitem[M{\"u}ller et~al.(2019)M{\"u}ller, Kornblith, and
  Hinton]{when_ls_help}
Rafael M{\"u}ller, Simon Kornblith, and Geoffrey Hinton.
\newblock When does label smoothing help?
\newblock In \emph{Advances in Neural Information Processing Systems}, 2019.

\bibitem[Natarajan et~al.(2013)Natarajan, Dhillon, Ravikumar, and
  Tewari]{noisy_initial2}
Nagarajan Natarajan, Inderjit~S Dhillon, Pradeep~K Ravikumar, and Ambuj Tewari.
\newblock Learning with noisy labels.
\newblock \emph{Advances in Neural Information Processing Systems},
  26:\penalty0 1196--1204, 2013.

\bibitem[Patrini et~al.(2017)Patrini, Rozza, Menon, Nock, and
  Qu]{noisy_back_correction}
Giorgio Patrini, Alessandro Rozza, Aditya~Krishna Menon, Richard Nock, and
  Lizhen Qu.
\newblock Making deep neural networks robust to label noise: A loss correction
  approach.
\newblock In \emph{Proceedings of the IEEE Conference on Computer Vision and
  Pattern Recognition}, pp.\  1944--1952, 2017.

\bibitem[Peterson et~al.(2019)Peterson, Battleday, Griffiths, and
  Russakovsky]{human_label}
Joshua~C Peterson, Ruairidh~M Battleday, Thomas~L Griffiths, and Olga
  Russakovsky.
\newblock Human uncertainty makes classification more robust.
\newblock In \emph{Proceedings of the IEEE/CVF International Conference on
  Computer Vision}, pp.\  9617--9626, 2019.

\bibitem[Roth et~al.(2021)Roth, Milbich, Ommer, Cohen, and
  Ghassemi]{roth2020s2sd}
Karsten Roth, Timo Milbich, Bj{\"o}rn Ommer, Joseph~Paul Cohen, and Marzyeh
  Ghassemi.
\newblock S2sd: Simultaneous similarity-based self-distillation for deep metric
  learning.
\newblock \emph{Proceedings of International Conference on Machine Learning},
  2021.

\bibitem[Ruppert(1988)]{momentum0}
David Ruppert.
\newblock Efficient estimations from a slowly convergent {R}obbins-{M}onro
  process.
\newblock Technical report, Cornell University Operations Research and
  Industrial Engineering, 1988.

\bibitem[Sandler et~al.(2018)Sandler, Howard, Zhu, Zhmoginov, and
  Chen]{mobilenet}
Mark Sandler, Andrew Howard, Menglong Zhu, Andrey Zhmoginov, and Liang-Chieh
  Chen.
\newblock {MobileNetV2}: Inverted residuals and linear bottlenecks.
\newblock In \emph{Proceedings of the IEEE Conference on Computer Vision and
  Pattern Recognition}, pp.\  4510--4520, 2018.

\bibitem[Simonyan \& Zisserman(2015)Simonyan and Zisserman]{vgg}
Karen Simonyan and Andrew Zisserman.
\newblock Very deep convolutional networks for large-scale image recognition.
\newblock \emph{Proceedings of International Conference on Learning
  Representations}, 2015.

\bibitem[Szegedy et~al.(2015)Szegedy, Liu, Jia, Sermanet, Reed, Anguelov,
  Erhan, Vanhoucke, and Rabinovich]{momentum1}
Christian Szegedy, Wei Liu, Yangqing Jia, Pierre Sermanet, Scott Reed, Dragomir
  Anguelov, Dumitru Erhan, Vincent Vanhoucke, and Andrew Rabinovich.
\newblock Going deeper with convolutions.
\newblock In \emph{Proceedings of the IEEE Conference on Computer Vision and
  Pattern Recognition}, pp.\  1--9, 2015.

\bibitem[Szegedy et~al.(2016)Szegedy, Vanhoucke, Ioffe, Shlens, and
  Wojna]{ls_first}
Christian Szegedy, Vincent Vanhoucke, Sergey Ioffe, Jon Shlens, and Zbigniew
  Wojna.
\newblock Rethinking the {I}nception architecture for computer vision.
\newblock In \emph{Proceedings of the IEEE Conference on Computer Vision and
  Pattern Recognition}, pp.\  2818--2826, 2016.

\bibitem[Tan \& Le(2019)Tan and Le]{efficientnet}
Mingxing Tan and Quoc Le.
\newblock {EfficientNet}: Rethinking model scaling for convolutional neural
  networks.
\newblock In \emph{International Conference on Machine Learning}, pp.\
  6105--6114. PMLR, 2019.

\bibitem[Tang et~al.(2020)Tang, Shivanna, Zhao, Lin, Singh, Chi, and
  Jain]{tang2020understanding}
Jiaxi Tang, Rakesh Shivanna, Zhe Zhao, Dong Lin, Anima Singh, Ed~H Chi, and
  Sagar Jain.
\newblock Understanding and improving knowledge distillation.
\newblock \emph{arXiv preprint arXiv:2002.03532}, 2020.

\bibitem[Yuan et~al.(2020)Yuan, Tay, Li, Wang, and Feng]{KD_poor_teacher}
Li~Yuan, Francis~EH Tay, Guilin Li, Tao Wang, and Jiashi Feng.
\newblock Revisiting knowledge distillation via label smoothing regularization.
\newblock In \emph{Proceedings of the IEEE/CVF Conference on Computer Vision
  and Pattern Recognition}, pp.\  3903--3911, 2020.

\bibitem[Zagoruyko \& Komodakis(2017)Zagoruyko and
  Komodakis]{zagoruyko2016paying}
Sergey Zagoruyko and Nikos Komodakis.
\newblock Paying more attention to attention: Improving the performance of
  convolutional neural networks via attention transfer.
\newblock \emph{Proceedings of International Conference on Learning
  Representations}, 2017.

\bibitem[Zhang et~al.(2019)Zhang, Song, Gao, Chen, Bao, and
  Ma]{beyourownteacher}
Linfeng Zhang, Jiebo Song, Anni Gao, Jingwei Chen, Chenglong Bao, and Kaisheng
  Ma.
\newblock Be your own teacher: Improve the performance of convolutional neural
  networks via self distillation.
\newblock In \emph{ICCV}, 2019.

\bibitem[Zhang \& Sabuncu(2020)Zhang and Sabuncu]{zhang2020self}
Zhilu Zhang and Mert~R Sabuncu.
\newblock Self-distillation as instance-specific label smoothing.
\newblock \emph{Advances in Neural Information Processing Systems}, 2020.

\bibitem[Zhang et~al.(2020)Zhang, Zhang, Arik, Lee, and Pfister]{noisy_app1}
Zizhao Zhang, Han Zhang, Sercan~O Arik, Honglak Lee, and Tomas Pfister.
\newblock Distilling effective supervision from severe label noise.
\newblock In \emph{Proceedings of the IEEE/CVF Conference on Computer Vision
  and Pattern Recognition}, pp.\  9294--9303, 2020.

\end{thebibliography}
\bibliographystyle{iclr2020_conference}

\clearpage
\appendix

Code, including the experiments producing the figures and a Filter-KD implementation, is available at
\url{https://github.com/Joshua-Ren/better_supervisory_signal}.

\section{Miscellaneous background}
\label{sec:background}

\paragraph{Calculation of ECE} Expected calibration error is a measurement about how well the predicted confidence represent the true correctness likelihood \citep{guo2017calibration}. For example, if our model gives 100 predictions, each with confidence, say, $q(y=k|\vx)\in[0.7,0.8]$). Then we might expect there are $70\sim80$ correct predictions among those 100 ones. To calculate this, we first uniformly divide $[0,1]$ into $M$ bins, with each bin represented by $I_m\in\left(\frac{m-1}{M},\frac{m}{M}\right]$ and $B_m$ is all the $\vx$ samples whose confidence falls into $I_m$. Then, ECE is calculated as:

\begin{equation}
    \text{ECE}=\sum_{m=1}^M\frac{|B_m|}{n}\left|acc(B_m)-conf(B_m)\right|,
\end{equation}

where $acc(B_m)=\frac{1}{|B_m|}\sum_{i\in B_m}\mathbbm{1}(\hat{y}_i=k_i)$, $conf(B_m)=\frac{1}{|B_m|}\sum_{i\in B_m}q(y=k_i|\vx_i)$, $k_i$ is the true label and $\hat{y}_i=argmax[q(y|\vx_i)]$ is the model's prediction. All the ECE mentioned in this paper is calculated by setting $M=10$.

\paragraph{Barycentric coordinate system} When visualizing the learning path, it is problematic to directly choose two dimensions and draw them in the Cartesian coordinate system, as illustrated in the first panel in \cref{fig:coord_system}.
In geometry, a suitable way to project a 3-simplex vector onto a 2-D plane is converting it to a point in a barycentric coordinate system. 
Specifically, we have three basis vectors: $\bmf{v}_0=[0,0]\tp$, $\bmf{v}_1=\left[1,0\right]\tp$, and $\bmf{v}_2=\left[\frac{1}{2},\frac{\sqrt{3}}{2}\right]\tp$, which are the corner and two edges of an equilateral triangle respectively.
Then the 2D-coordinate is calculated as $(x,y)=[\bmf{v}_0;\bmf{v}_1;\bmf{v}_2]\cdot\vq$.
So every points in the left corner of a Cartesian system plane can be converted to the triangle in a barycentric system, as illustrated in the last panel in \cref{fig:coord_system}.

\begin{figure}[ht]
    \centering
    \includegraphics[width=0.9\textwidth]{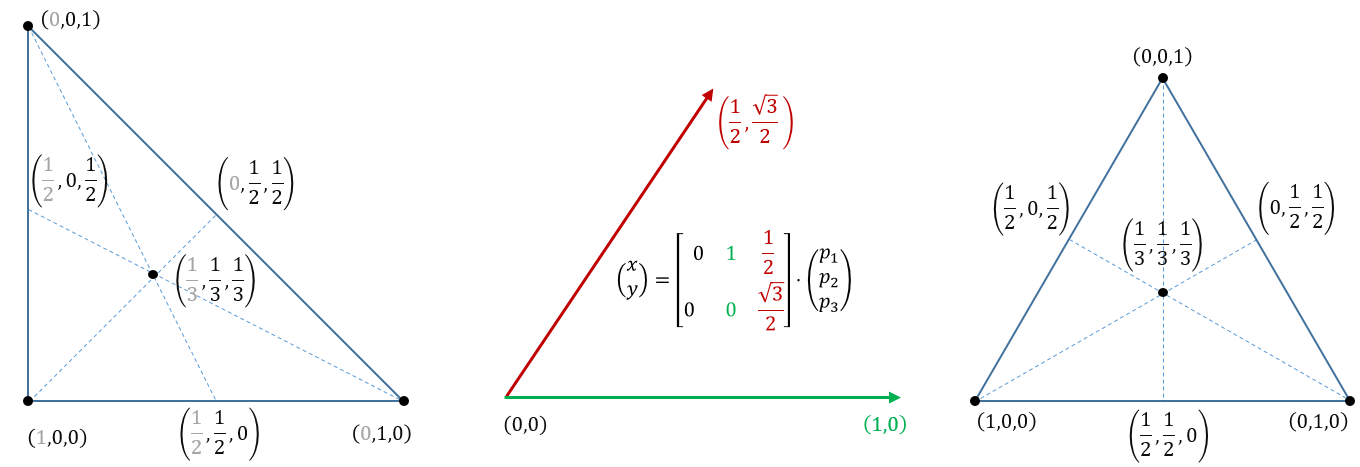}
    \caption{How to project a probability vector on to the plane of Barycentric coordinate system.}
    \label{fig:coord_system}
\end{figure}

\section{Risk estimate variance bound} \label{sec:risk-est-var}
The following result is a generalization of Proposition 3 of \citet{KD_probability},
whose proof we replicate and extend here:
\begin{prop} \label{prop:var-bound-extended}
Let $L$ be any bounded loss, $L(y, \hat y) \le \ell < \infty$, and consider $R_\tar$ of \eqref{eq:tar_risk}.
For any predictor $f : \mathcal X \to \Delta^K$,
we have that
\[
    \E_{\mathcal D'}\left[(R_\tar(f,\mathcal{D}') - R(f))^2 \right]
    \le
    \frac{1}{N} \Var_{\vx}\left[ \vp_\tar\tp \vL(f(\vx))\right]
    + \xi
,\]
where $\xi$ can be any of the following seven quantities:
\begin{gather*}
    \ell^2 K \left( \E_{\vx} \lVert \vp_\tar(\vx) - \vp^*(\vx) \rVert_2 \right)^2
    \\
    \ell^2 \left( \E_{\vx} \lVert \vp_\tar(\vx) - \vp^*(\vx) \rVert_1 \right)^2
    \\
    2\ell^2 \left( \E_{\vx} \sqrt{\operatorname{KL}( \vp_\tar(\vx) \,\|\, \vp^*(\vx) )} \right)^2
    \qquad
    2\ell^2 \E_{\vx} \operatorname{KL}( \vp_\tar(\vx) \,\|\, \vp^*(\vx) )
    \\
    2\ell^2 \left( \E_{\vx} \sqrt{\operatorname{KL}( \vp^*(\vx) \,\|\, \vp_\tar(\vx) )} \right)^2
    \qquad
    2\ell^2 \E_{\vx} \operatorname{KL}( \vp_\tar(\vx) \,\|\, \vp^*(\vx) )
    \\
    \ell^2 \E_{\vx}\big[
        \operatorname{KL}( \vp_\tar(\vx) \,\|\, \vp^*(\vx) )
        + 
        \operatorname{KL}( \vp^*(\vx) \,\|\, \vp_\tar(\vx) )
    \big]
.\end{gather*}
\end{prop}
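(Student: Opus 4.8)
The plan is to start from the standard bias--variance decomposition of the mean squared error. Writing $g(\vx) = \vp_\tar(\vx)\tp \vL(f(\vx))$, observe that $R_\tar(f,\mathcal D') = \frac1N \sum_n g(\vx_n)$ is an empirical mean of $g$ over $\vx_n$ drawn i.i.d.\ from $p(\vx)$, while $R(f) = \E_\vx[\vp^*(\vx)\tp \vL(f(\vx))]$. Since $\E_{\mathcal D'}[R_\tar] = \E_\vx[g(\vx)]$, I would decompose
\[
    \E_{\mathcal D'}\!\left[(R_\tar - R)^2\right] = \Var_{\mathcal D'}[R_\tar] + \left(\E_{\mathcal D'}[R_\tar] - R\right)^2 .
\]
Because the $\vx_n$ are independent, the variance term is \emph{exactly} $\frac1N \Var_\vx[\vp_\tar\tp \vL(f(\vx))]$, matching the first term of the claimed bound. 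Everything then reduces to bounding the squared bias $\big(\E_\vx[(\vp_\tar(\vx) - \vp^*(\vx))\tp \vL(f(\vx))]\big)^2$ by each of the seven choices of $\xi$.

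Writing $\Delta(\vx) = \vp_\tar(\vx) - \vp^*(\vx)$, I would first establish a single ``master'' inequality and derive the rest from it. Since the loss is bounded we have $\lVert \vL(f(\vx)) \rVert_\infty \le \ell$, so Hölder's inequality gives $\lvert \Delta(\vx)\tp \vL(f(\vx)) \rvert \le \ell\,\lVert \Delta(\vx) \rVert_1$; hence the squared bias is at most $\ell^2\big(\E_\vx \lVert \Delta(\vx) \rVert_1\big)^2$, which is the $L_1$ form of $\xi$. The $L_2$ form follows immediately from $\lVert \Delta \rVert_1 \le \sqrt{K}\,\lVert \Delta \rVert_2$.

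For the KL forms, I would invoke Pinsker's inequality, $\lVert \vp - \vq \rVert_1 \le \sqrt{2\,\operatorname{KL}(\vp \,\|\, \vq)}$, in both argument orders (legitimate because $\lVert \Delta \rVert_1$ is symmetric in its two arguments) to obtain $\lVert \Delta \rVert_1 \le \sqrt{2\,\operatorname{KL}(\vp_\tar \,\|\, \vp^*)}$ and $\lVert \Delta \rVert_1 \le \sqrt{2\,\operatorname{KL}(\vp^* \,\|\, \vp_\tar)}$. Substituting each into the master bound yields the two ``square-root-inside-the-expectation'' forms. Applying Jensen's inequality in the form $(\E X)^2 \le \E[X^2]$ with $X = \sqrt{\operatorname{KL}}$ then collapses the outer square and the square root into the two ``expectation-of-KL'' forms. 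Finally, averaging the two squared Pinsker bounds gives $\lVert \Delta \rVert_1^2 \le \operatorname{KL}(\vp_\tar \,\|\, \vp^*) + \operatorname{KL}(\vp^* \,\|\, \vp_\tar)$, and one more application of Jensen produces the symmetrized (seventh) form.

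I do not anticipate a serious obstacle, as the argument is a chain of standard inequalities (Hölder, Cauchy--Schwarz, Pinsker, Jensen). The only points demanding care are (i) confirming that the variance term is reproduced \emph{exactly} rather than merely bounded, which hinges on the independence of the samples in the decomposition above, and (ii) tracking which direction of KL each Pinsker application produces and applying Jensen in the correct (convex) direction, so that the square-root-inside forms come out genuinely sharper than the plain expectation-of-KL forms.
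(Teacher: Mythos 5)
Your proposal is correct and follows essentially the same route as the paper's proof: the same bias--variance decomposition with the variance term computed exactly from i.i.d.\ sampling, Hölder (respectively Cauchy--Schwarz) for the $L_1$ and $L_2$ forms, and Pinsker followed by Jensen for the KL and Jeffreys forms. The only cosmetic difference is that you derive the $L_2$ bound from the $L_1$ bound via $\lVert \Delta \rVert_1 \le \sqrt{K}\,\lVert \Delta \rVert_2$ rather than applying Cauchy--Schwarz directly to $\Delta\tp \vL$, which yields the identical constant.
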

\begin{proof}
To begin,
\[
    \E_{\mathcal D'}\left[(R_\tar(f,\mathcal{D}') - R(f))^2 \right]
    = \Var_{\mathcal D'}[R_\tar(f,\mathcal{D}') - R(f)]
    + \left( \E_{\mathcal D'}\left[R_\tar(f,\mathcal{D}') - R(f) \right] \right)^2
.\]
For the variance term,
since $R(f)$ is a constant and
$R_\tar$ is an average of $N$ i.i.d.\ terms,
we get
\[
    \Var_{\mathcal D'}[R_\tar(f,\mathcal{D}') - R(f)]
    = \frac{1}{N} \Var_{\vx}[\vp_\tar(\vx)\tp \vL(f(\vx)) ]
.\]

The other term,
as $R_\tar$ is an average of i.i.d.\ terms and
$R(f) = \E_{\vx} \vp^*(\vx)^\top \vL(f(\vx))$,
is
\[
    \big( \E_{\mathcal D'}\left[R_\tar(f,\mathcal{D}') - R(f) \right] \big)^2
    =
    \left( 
    \E_{\vx} (\vp_\tar(\vx) - \vp^*(\vx))^\top \vL(f(\vx))
    \right)^2
.\]

For the first bound, we apply the Cauchy-Schwarz inequality,
\[
    (\vp_\tar(\vx) - \vp^*(\vx))^\top \vL(f(\vx))
    \le \lVert \vp_\tar(\vx) - \vp^*(\vx) \rVert_2 \lVert \vL(f(\vx)) \rVert_2
;\]
since the elements of $\vL(f(\vx))$ are each at most $\ell$,
the term $\lVert \vL(f(\vx)) \rVert_2$
is at most $\sqrt{K} \ell$.

For the other bounds, we instead apply Hölder's inequality,
yielding
\[
    (\vp_\tar(\vx) - \vp^*(\vx))^\top \vL(f(\vx))
    \le  \lVert \vp_\tar(\vx) - \vp^*(\vx) \rVert_1 \lVert \vL(f(\vx)) \rVert_\infty
    \le \ell \lVert \vp_\tar(\vx) - \vp^*(\vx) \rVert_1
.\]
The KL bounds follow by Pinsker's inequality and then Jensen's inequality.
The last bound, for the Jeffreys divergence,
combines the two KL bounds.
\end{proof}

\section{Toy Gaussian dataset}
\label{sec:toy-dataset}

\begin{figure}[ht]
    \centering
    \includegraphics[width=0.95\textwidth]{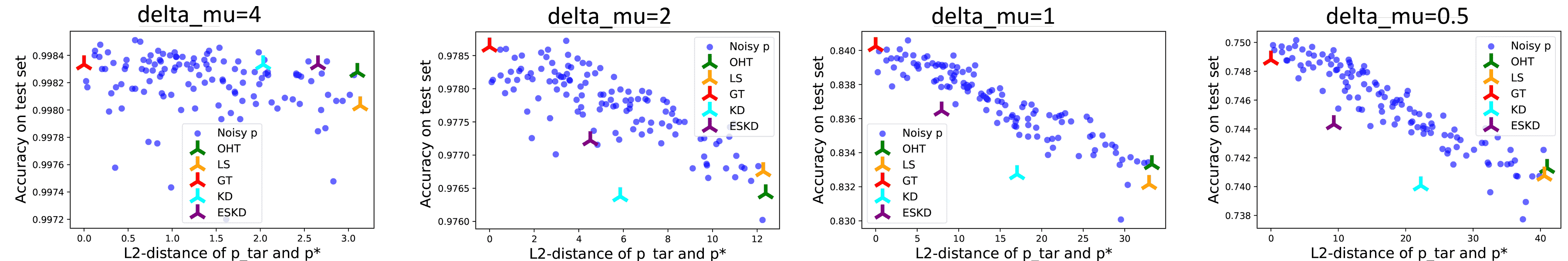}
    \caption{Correlation between test accuracy and $\|\vp_\text{tar}-\vp^*\|_2$ for different settings.}
    \label{fig:toy_acc_more}
\end{figure}

\begin{figure}[ht]
    \centering
    \includegraphics[width=0.95\textwidth]{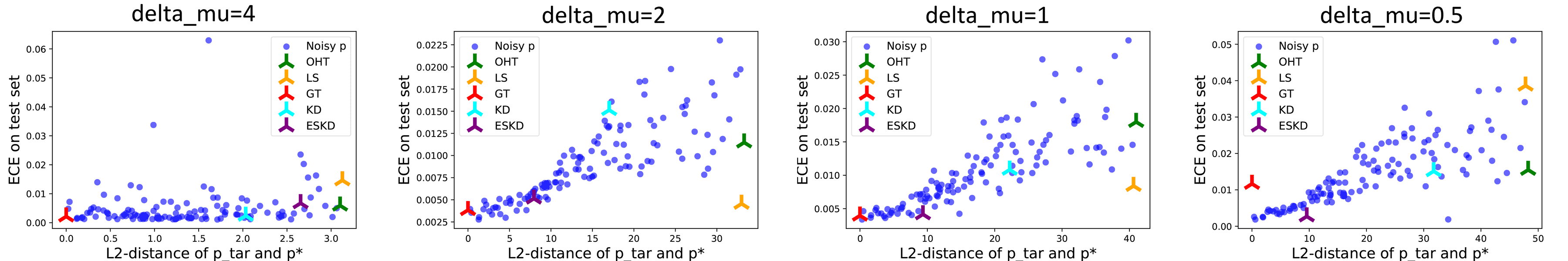}
    \caption{Correlation between test ECE and $\|\vp_\text{tar}-\vp^*\|_2$ for different settings.}
    \label{fig:toy_ece_more}
\end{figure}

In \cref{sec:supervision-generalization,sec:path-insights}, we apply a toy Gaussian dataset to verify two facts derived from \cref{hyp:l2-gen}, so as the learning path of specific samples. Here we provide more details about this dataset.

\paragraph{Generate the dataset} Here, we have $N$ samples, each sample a 3-tuple ($\vx, y, \vp^*)$. To get one sample, we first select the label $y=k$ following an uniform distribution over all $K$ classes. After that, we sample the input signal $\vx|_{y=k}\sim\mathcal{N}(\bmf{\mu}_k,\sigma^2I)$, where $\sigma$ is the noisy level for all the samples. $\bmf{\mu}_k$ is the mean vector for all the samples in class $k$. Each $\bmf{\mu}_k$ is a 30-dim vector, in which each dimension is randomly selected from $\{-\delta_\mu,0,\delta_\mu\}$. Such a process is similar to selecting 30 different features for each class. Finally, we calculate the true Bayesian probability of this sample, i.e., $p^*(y|\vx)$.

\paragraph{Calculate the ground truth probability} We use the fact that $p^*(y|\vx)\propto p(\vx|y)p(y)$. As $y$ follows an uniform distribution, we have $p^*(y|\vx)=\frac{p(\vx|y=k)}{\sum_{j\neq k}p(\vx|y=j)}$. Following $p(\vx|y=k)\sim\mathcal{N}(\bmf{\mu}_k,\sigma^2I)$, we find $p^*(y|\vx)$ should have a Softmax form, i.e., $p=\frac{\rm{e}^{s_k}}{\sum_{j\neq k}\rm{e}^{s_j}}$. Specifically, we have:

\begin{equation}
    p^*(y=k|\vx)=\frac{\rm{e}^{s_k}}{\sum_{j\neq k}\rm{e}^{s_j}};\quad s_i = -\frac{1}{2\sigma^2}\|\vx-\bmf{\mu}_i\|^2_2
.\end{equation}

\paragraph{Setup of experiments in \cref{fig:toy}} in this experiment, we generate a toy Gaussian dataset with $K=3$, $\sigma=2$ and $N=10^5$. To reduce the variance of test error, we make a train/valid/test split with ratio [0.05 0.05, 0.9]. We train an MLP with 3 hidden layers, each with 128 hidden units and ReLU activations. We first conduct experiments on some baseline settings, i.e., learning from one-hot supervision (OHT for short), from smoothed label supervision (LS for short), from a converged teacher's prediction (KD for short), and from an early-stopped teacher's prediction (ESKD for short). In OHT case, an NN is trained under the supervision of $\ve_y$. If we train this NN until the convergence of training accuracy, we obtain the $\vp_\tar$ for the KD case. If we select the snapshot of that NN based on the best validation accuracy, we obtain the $\vp_\tar$ for the ESKD case. If we directly set $\vp_\tar=0.9\ve_y+0.1\bmf{u}$, where $\bmf{u}$ is an uniform distribution over $K$ classes, we obtain the supervision for LS case. With different supervisions, we train a new network with identical structure until the validation accuracy no longer increase. Furthermore, to see a trend between generalization ability and $\|\vp_\tar-\vp^*\|_2$, we run the experiment 200 times under different noisy supervisions.

\section{Temperature in different KD methods}
\label{sec:temperature}

Temperature is an important hyper-parameter in different kinds of KD methods, 
which might influence the performance a lot. 
In this appendix, we will discuss why we prefer $\tau=1$.

\paragraph{The role played by $\tau$}

In general, the loss function in KD has the form:
\begin{equation}
    L=\beta\cdot\left(\frac{1}{\tau^2}\right)\cdot\mathcal{H}(\vq^\tau,\vp_\text{tar}^\tau)+(1-\beta)\cdot\mathcal{H}(\vq,\ve_y),
    \label{eq:kd_loss}
\end{equation}
where $\beta\in[0,1]$ is another hyper-parameter to trade-off the importance between one-hot label and teacher's predictions.
Furthermore, for this loss, the gradient of $L$ to logits $\vz$ is:
\begin{equation}
    \frac{\partial L}{\partial\vz_i}=\beta\cdot\left(\frac{1}{\tau}\right)\cdot(\vq^\tau_i-\vp^\tau_{\text{tar}_i})+(1-\beta)\cdot(\vq_i-\ve_{y_i}).
    \label{eq:kd_gradient}
\end{equation}

\begin{table}[t]
    \centering
    \resizebox{1\textwidth}{!}{
    \begin{tabular}{c|cccc|cccc|cccc}
    \hline
     & \multicolumn{4}{c|}{Run1} & \multicolumn{4}{c|}{Run2} & \multicolumn{4}{c}{Run3} \\ \cline{2-13} 
     & $\tau$=0.5 & $\tau$=1 & $\tau$=2 & $\tau$=10 & $\tau$=0.5 & $\tau$=1 & $\tau$=2 & $\tau$=10 & $\tau$=0.5 & $\tau$=1 & $\tau$=2 & $\tau$=10 \\ \hline
    \textbf{OHT} & 94.87 & - & - & - & 95.15 & - & - & - & 94.48 & - & - & - \\
    \textbf{ESKD} & 95.02 & 95.27 & \textbf{95.34} & 95.05 & 95.41 & \textbf{95.41} & 95.39 & 94.76 & 94.11 & 94.65 & \textbf{94.88} & 94.75 \\
    \textbf{FilterKD} & 95.18 & \textbf{95.87} & 95.66 & 94.81 & 95.11 & 95.56 & \textbf{95.71} & 94.81 & 95.09 & \textbf{95.31} & 95.22 & 95.16 \\ \hline
    \textbf{OHT} & 77.85 & - & - & - & 78.23 & - & - & - & 77.99 & - & - & - \\
    \textbf{ESKD} & 78.28 & 78.56 & \textbf{79.28} & 79.09 & 78.50 & 78.20 & \textbf{79.3} & 78.81 & 78.12 & 78.31 & \textbf{78.84} & 78.36 \\
    \textbf{FilterKD} & 78.43 & 79.57 & \textbf{79.72} & 79.65 & 79.23 & \textbf{79.61} & 79.45 & 79.01 & 79.75 & 80.32 & \textbf{80.41} & 79.99 \\ \hline
    \end{tabular}}
    \caption{The influence of temperature in self-distillation on CIFAR10/100 dataset.}
    \label{tab:diff_temp}
\end{table}

\begin{figure}[t]
    \centering
    \includegraphics[width=0.95\textwidth]{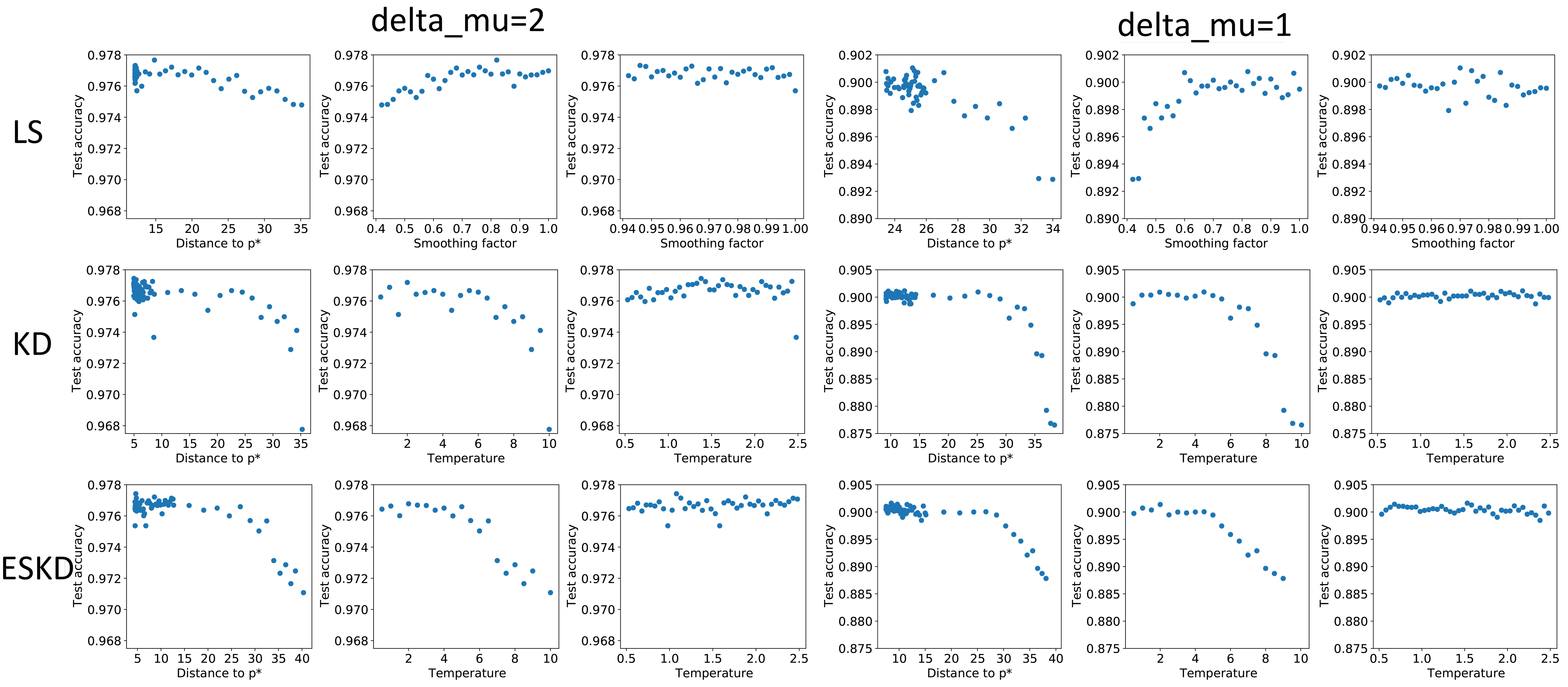}
    \caption{The influence of label smoothing factor on LS (first row) and different $\tau$ on KD (other rows) under two different settings of the toy dataset. It is clear that these hyper-parameters won't influence the performance too much as long as they are in a good region.}
    \label{fig:toy_ls_kd_eskd}
\end{figure}

\paragraph{Why we use $\tau=1$}
The most important reason for us to choose $\tau=1$ (as well as $\beta=1$), 
even with the risk of providing sub-optimal performance, 
is that we want to observe how much enhancement is brought by refining the label. 
In our mind, KD's success comes from the following two aspects:
\begin{enumerate}[nosep,label=(\alph*)]
    \item better label (i.e., $\vp_\text{tar}$ is better than $\ve_y$);
    \item better learning dynamics (soften $\vq$ to $\vq^\tau$ make the training easier).
\end{enumerate}
The first one provides the student with more useful knowledge, and the second one helps the student to extract it. 
The focus of our paper is the improvement in labels. 
When the temperature is not 1, both (a) and (b) are influenced, 
as we are using $\vq^\tau$ (i.e., the smoothed student output) to match $\vp_\text{tar}^\tau$ (i.e., the smoothed target) during training and using $\vq$ to inference during testing. 
If we fix $\tau=1$, the only difference between Filter-KD, ESKD, KD, LS, and OHT training is $\vp_\text{tar}$. 
Under such a condition, we believe the fact that Filter-KD/ESKD outperforms OHT (strictly better in each run) is enough to conclude KD methods can provide better labels. 
Furthermore, as Filter-KD is proposed to mitigate the high-variance issue in ESKD, 
and we can directly observe the zig-zag learning path of samples with bad labels, 
we believe it is reasonable to make the conclusion in the paper.

\begin{wrapfigure}{r}{7cm}
    \centering
    \includegraphics[width=0.5\textwidth]{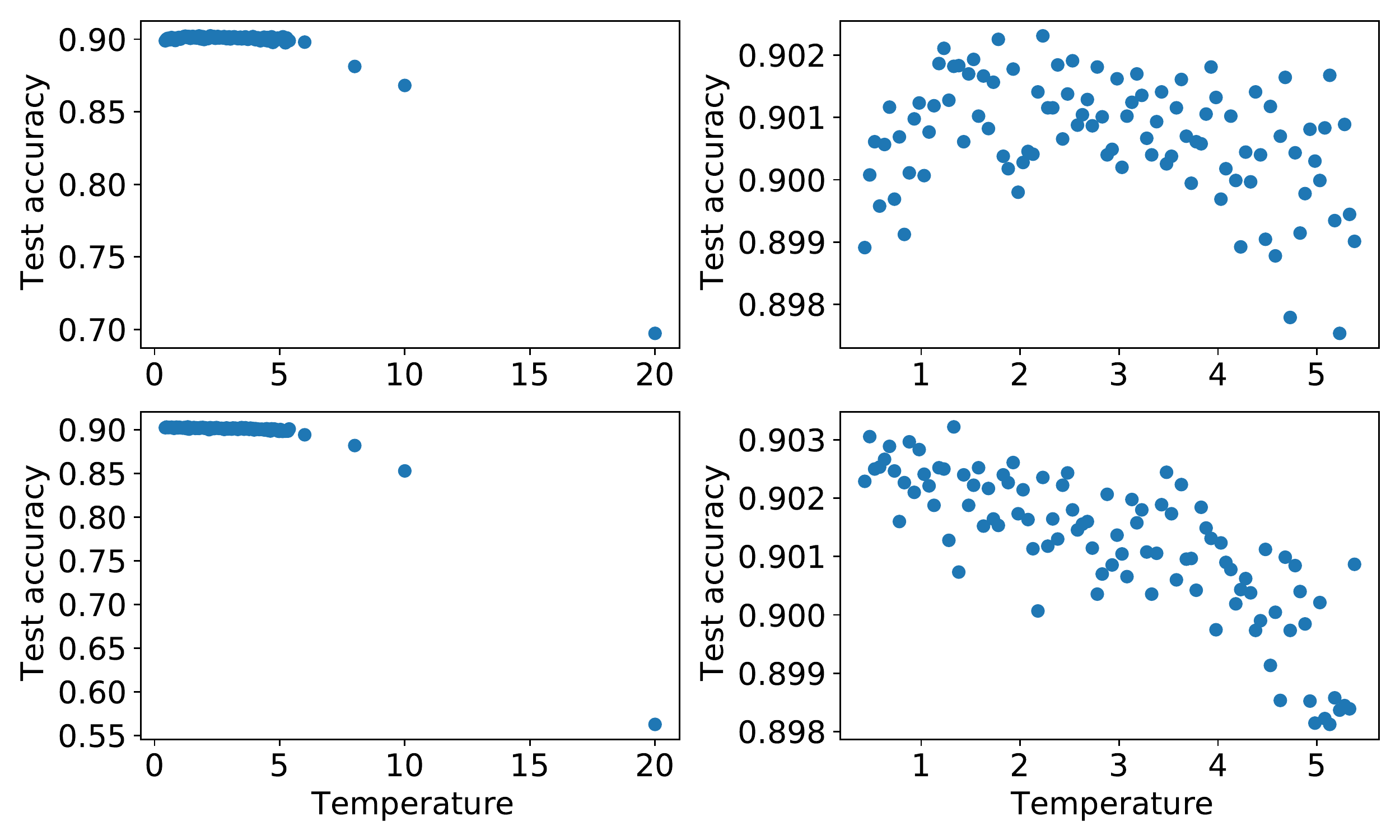}
    \caption{First row: large teacher to small student; second row: self-distill. Left column: $\tau$ in a larger range; right column: $\tau$ in a small range.}
    \label{fig:tmp_fine}
\end{wrapfigure}

Furthermore, using $\tau\neq 1$ doesn't substantially change our analysis. 
As illustrated in \cref{eq:kd_gradient}, the gradient shows that $\vq^\tau$ moves towards $\vp_\text{tar}^\tau$ in each update. 
Hence our analysis in \cref{prop:decomposition} still holds, 
which means the trade-off between the two forces still exists. 

\paragraph{$\tau=1$ is reasonably good in our settings}
In fact, the optimal $\tau$ depends on many settings, 
e.g., teacher's size, optimizer, learning rate (scheduler), and etc:
there is still no consensus on what is the best choice of $\tau$ for all settings.
So for the experiments in the main context, we fix $\tau$ and fine-tune other hyper-parameters to achieve good results.
However, to verify this choice of $\tau$ is not terrible,
we performed a coarse grid search on both CIFAR and toy examples.
As illustrated in \cref{tab:diff_temp,fig:toy_ls_kd_eskd}, 
choosing $\tau\in[1,2]$ seems to be good choice: the performance doesn't significantly decay until $\tau$ is too large.

Some readers might also curious about why our results don't match the common notion that the optimal $\tau$ should be $4$,
which is first provided in \citet{KD_initial} and followed by much later work \citep[e.g.][]{KD_2stage,zagoruyko2016paying,KD_earlystop}.
Some works also use a grid search to conclude that a temperature as high as $20$ should be the best choice \citep{KD_poor_teacher,grid_temp}.
We find this mismatch comes from the relative difference between the network size of teacher and student.
In short, when distilling from a large teacher to a small student (as most of the cases in aforementioned works),
high $\tau$ is preferred.
When conducting self-distillation, $\tau$ need not be that high.
For example, \citet{beyourownteacher} and \citet{roth2020s2sd} claim $\tau=1$ is the best choice, while \citet{zhang2020self} claim $\tau\approx2$ is the best.
To further verify this, we compare the temperature trend between two cases on the toy dataset.
In \cref{fig:tmp_fine}, the first row is distilling a 10-layer, 256-width MLP to a 3-layer, 32-width MLP; the second row is self-distillation between 3-layer, 32-width MLPs.

At the same time, we also run a grid search the smoothing factor $\alpha$ of our Filter-KD in \cref{tab:diff_smooth}.

\begin{table}[ht]
    \centering
    \begin{tabular}{c|cccccc}
    \hline
    Smoothing $\alpha$ & 0.01 & 0.05 & 0.1 & 0.2 & 0.5 & 1 (ESKD) \\ \hline
    \textbf{FilterKD} & 79.50 & \textbf{80.00} & 79.83 & 79.59 & 78.48 & 78.39 \\ \hline
    \end{tabular}
    \caption{The influence of the smoothing factor in FilterKD on CIFAR100 (mean of 3 runs). For comparison, OHT obtained 77.64.}
    \label{tab:diff_smooth}
\end{table}

\section{How representative is the zig-zag pattern?}
\label{sec:how_rep_zigzag}

In the main paper, we only visualize the zig-zag learning path of a few samples in \cref{fig:zig_zag_3,fig:path_cifar10}. 
The readers might then wonder whether this pattern is representative across the whole dataset.
To verify this, we define a quantitative metric called \textbf{zig-zag score}.
Specifically, we first calculate the integral of each dimension of the prediction:
\begin{equation}
    Q_{i\in\{1,...,K\}}\triangleq\sum^T_{t=1}\vq^t_i(\vx_n).
\end{equation}
We then use the highest $Q_{i}$ among $i\in\{1,...,K\} \setminus \{y\}$, where $y$ is the label's class, as the zig-zag score.
In other words, we focus on the behavior of $\vq$ on those dimensions that are \emph{not} the training label.
If this score is large, we might expect the neighbouring samples exert high influence on the path, and vice versa.
Note that as we have $\sum_{i}Q_i=\text{constant}$ for any sample (as $\vq^t$ is a K-simplex),
this zig-zag score might correlated with C-score of \citet{c_score}.
However, their focuses are different:
C-score is similar to $-Q_{i=y}$ and focuses more on how fast the prediction converge to the label;
zig-zag score, on the other hand, is more about how much the path deviates towards a different class, possibly the label close to $\vp^*$.

\begin{figure}
    \centering
    \includegraphics[width=0.4\textwidth]{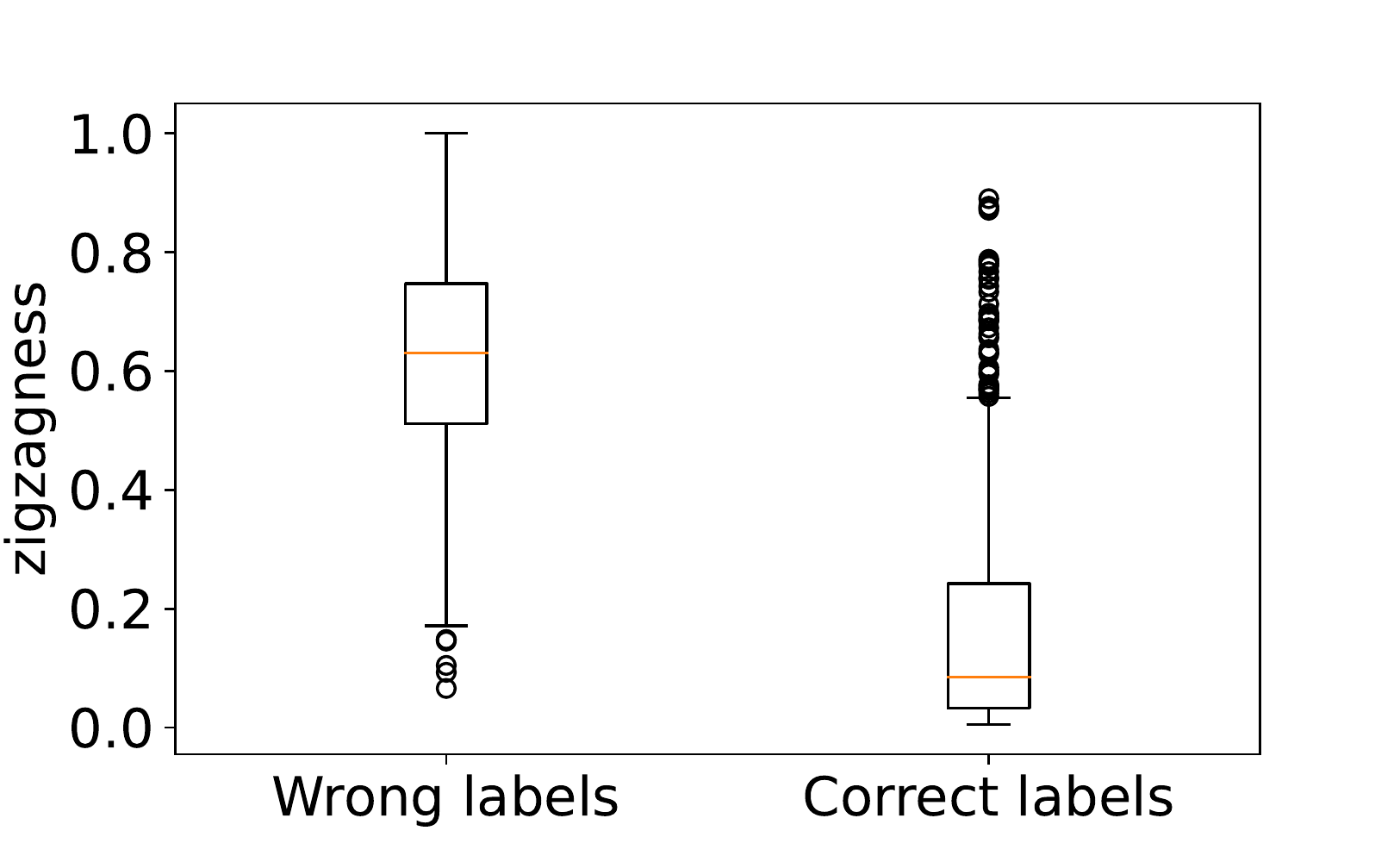}
    \caption{The learning path of samples with correct and wrong labels.}
    \label{fig:fix_wrong_label_02}
\end{figure}

With this score, we test the correlation between base difficulty and zig-zag score on toy datasets under different settings.
From \cref{fig:toy_zigzagness}, it is safe to conclude that samples with higher base difficulty will have a more zig-zagging path during training.
We also compare the expected zig-zag score of the 1000 samples with flipped label in CIFAR10 (recall the experiment in \cref{fig:fix_wrong_label}).
In \cref{fig:fix_wrong_label_02}, 
it is clear that the zig-zagness of these wrong label samples (which we are sure have high base difficulty) is significantly higher than the average.

Finally, we also randomly select some data samples in each class of CIFAR10 and visualize their learning paths.
\Cref{fig:cifar_path_nosiy} shows random samples for the noisy label experiment. %
It is clear that the learning path of easy samples with correct labels converge fast while that of samples with wrong labels is zig-zag.
In \cref{fig:cifar_path_clean}, which shows the learning path with high zig-zag score when the training set is clean,
we can still observe some samples with zig-zag path.
These samples might be ambiguous, with a quite flat $\vp^*$.
However, as we do not know the true $\vp^*$ of these samples,
it is impossible to provide a result like \cref{fig:fix_wrong_label_02}.

\begin{figure}
    \centering
    \includegraphics[width=0.95\textwidth]{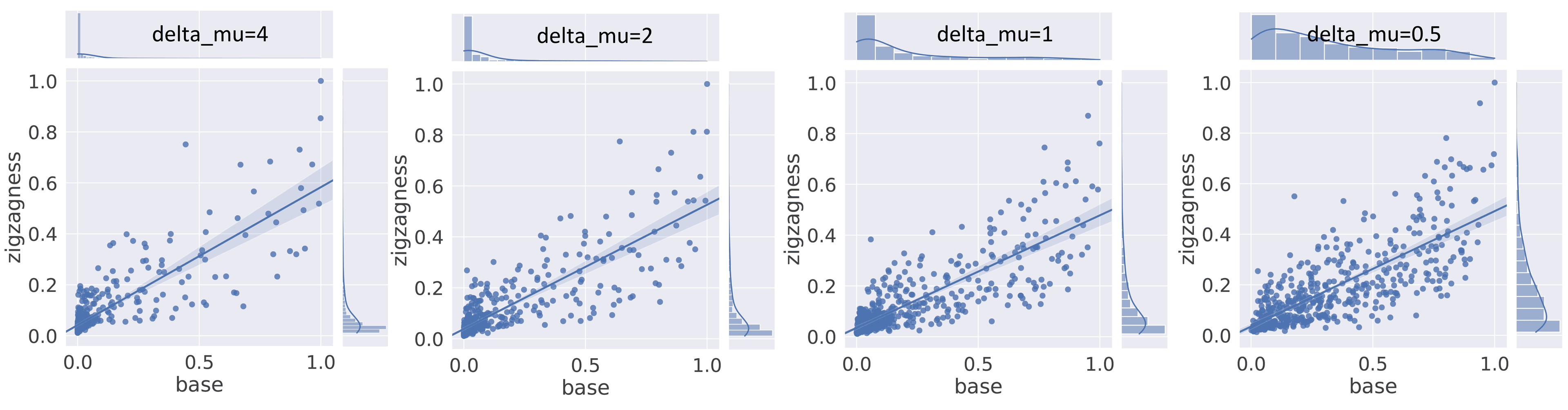}
    \caption{Correlation between base difficulty and zig-zag score in four different toy datasets.}
    \label{fig:toy_zigzagness}
\end{figure}

\begin{figure}
    \centering
    \begin{subfigure}{.75\textwidth}
        \centering
        \includegraphics[width=\linewidth]{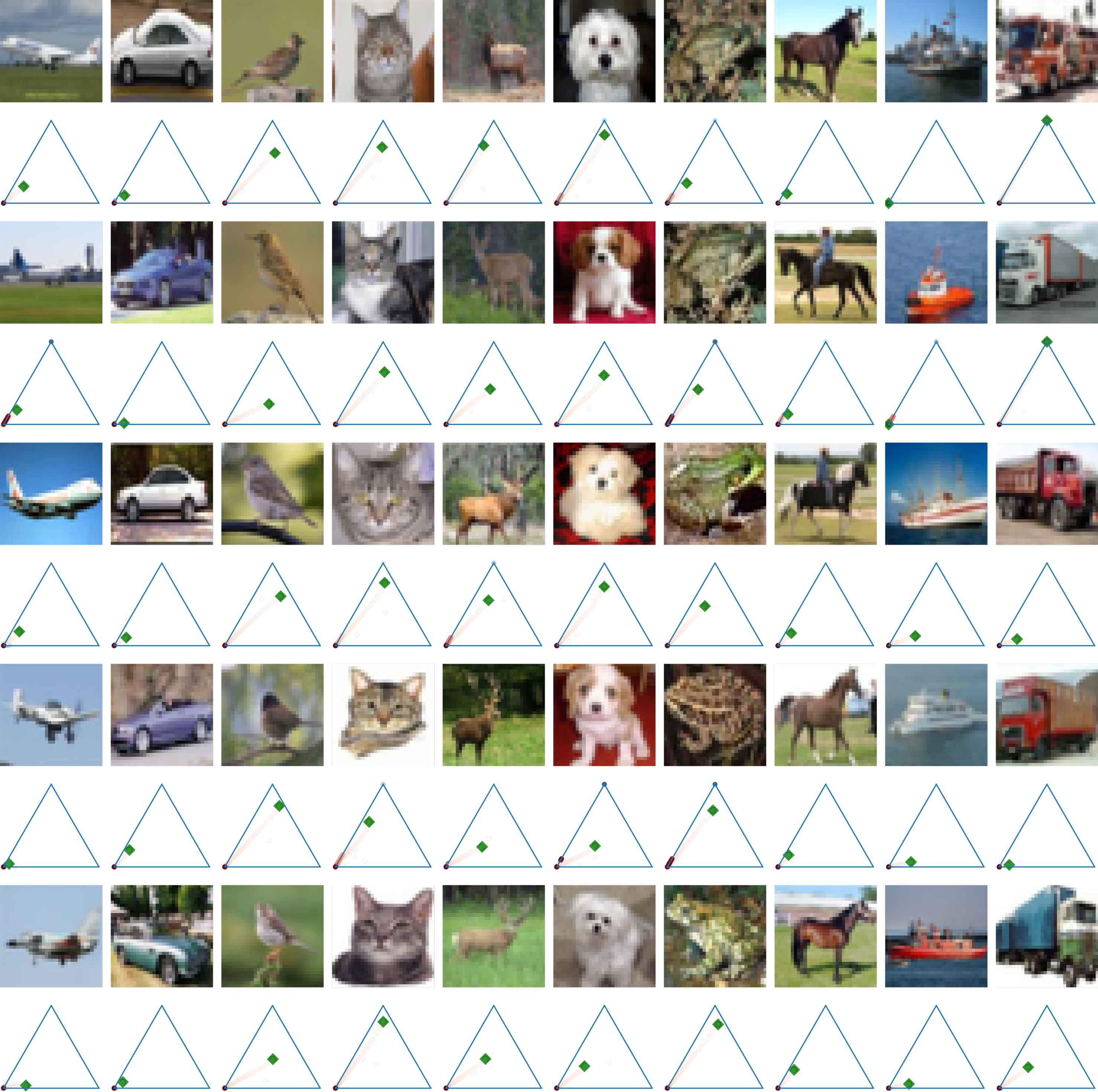}
        \caption{Samples with clean labels.}
    \end{subfigure}
    
    \begin{subfigure}{.75\textwidth}
        \centering
        \includegraphics[width=\linewidth]{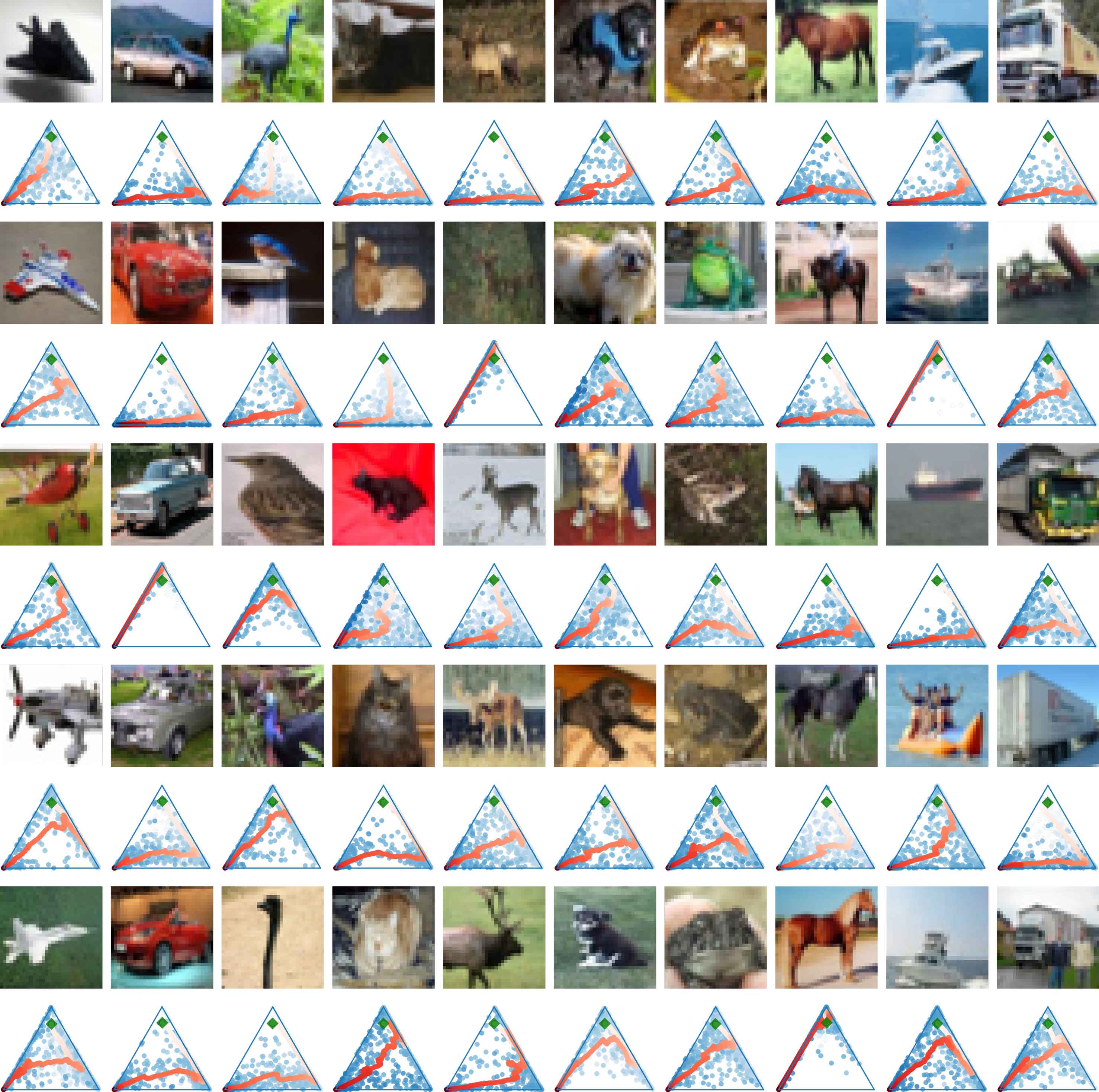}
        \caption{Samples with wrong labels.}
    \end{subfigure}
    \caption{Random selection of samples in CIFAR10 with 1000 flipped labels.}
    \label{fig:cifar_path_nosiy}
\end{figure}

\begin{figure}[t]
    \centering
    \includegraphics[width=0.97\textwidth, viewport = 0 0 1479 570, clip]{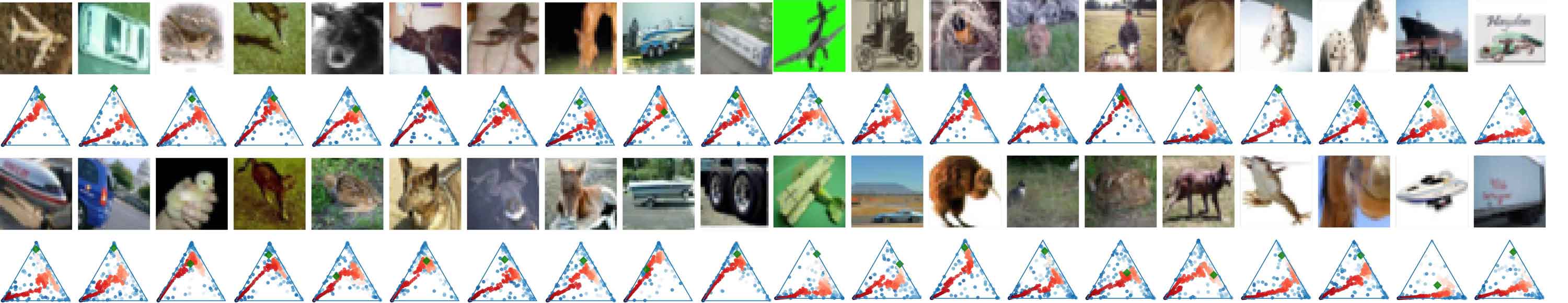}
    \\[5px]
    
    \includegraphics[width=0.97\textwidth, viewport = 1480 0 2959 570, clip]{graphs/samples_cifarhard.jpg}
    \caption{Random selection of samples with high zig-zag score in clean CIFAR10.}
    \label{fig:cifar_path_clean}
\end{figure}

\section{Distance gap under different supervisions}
\label{sec:distance_gap}
Figure~\ref{fig:difficulty_combine} provides the distance gap between $\vq$ and corresponding $\vp^*$ for each training sample in different training stages, under the supervision of one-hot label $\ve_y$. From the results in this figure, we notice that the behavior of hard samples contributes more to the success of ESKD. Here we further visualize how these gaps changes when the model is trained under different types of supervision, i.e., ground truth, LS and ESKD. 

In the first row, which is the result of label smoothing, we see the $\|\vp_\tar-\vp^*\|_2$ (i.e., red dots) has a ``V''-shape. The vertex location depends on the smoothing parameter we choose.
However, as discussed previously, label smoothing has only one parameter to control $\vp_\tar$, which is like using a linear model to fit a high-order function.
So, although a proper smoothing value can bring better supervision than one-hot label, its upper bound might be limited.
Regarding the training dynamics, we can see a similar trend as the results shown in the one-hot case, i.e., all the samples first move down and then converge to $\vp_\tar$.
From the middle panel, we might expect label smoothing to outperform the one-hot case, because the scatters are closer to the x-axis, which represents the ground truth $\vp^*$.

\begin{figure} %
    \centering
    \includegraphics[width=\textwidth]{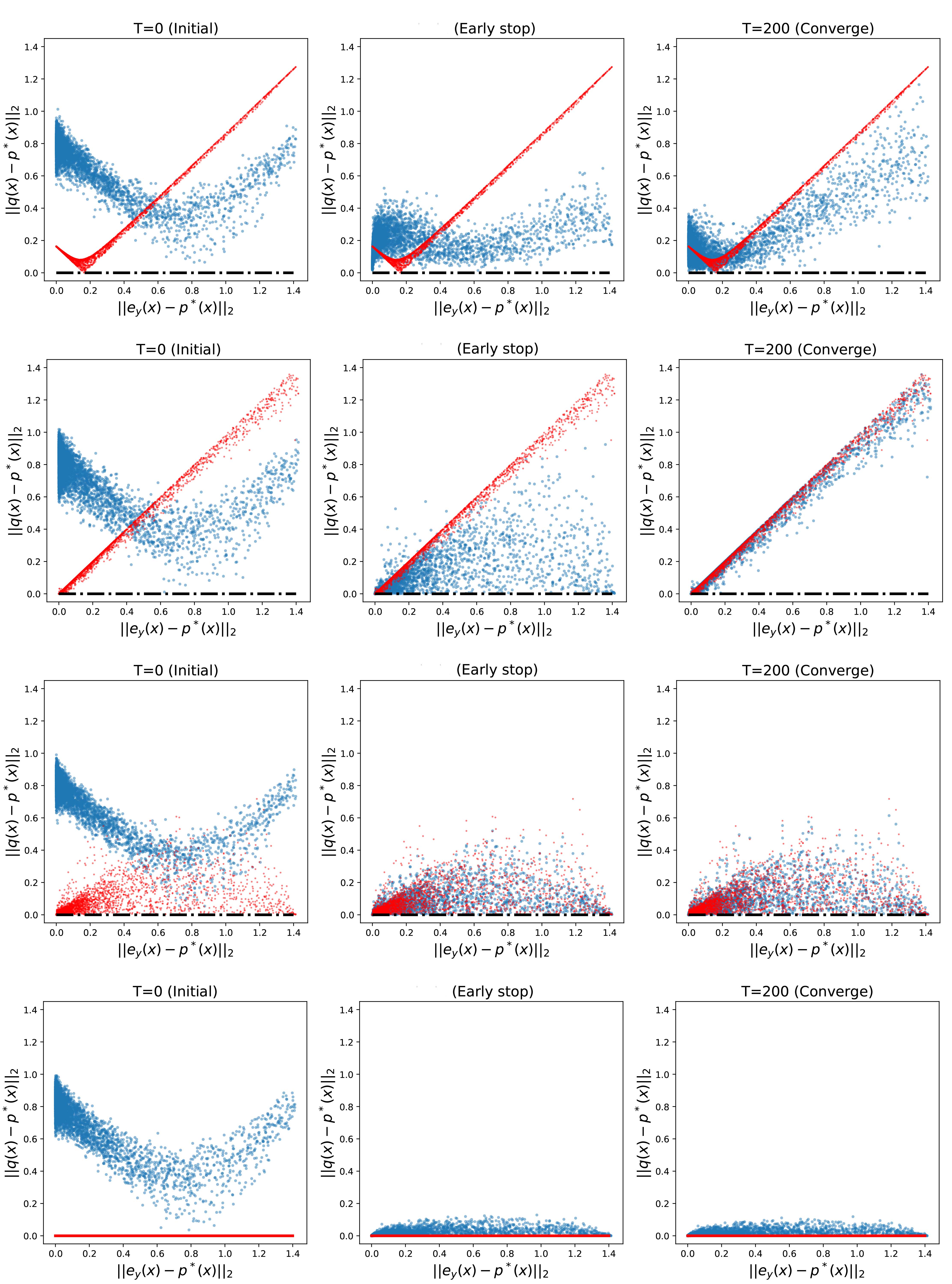}
    \caption{Distance gap of each sample under different supervision: label smoothing, KD, ESKD, and ground-truth training.}
    \label{fig:distance_gap}
\end{figure}

The second and the third rows demonstrate the KD and ESKD case.
Results in the KD case are quite similar to the one-hot case in \cref{fig:difficulty_combine}, because the converged $\vp_\tar$ is close to $\ve_y$.
However, we can still expect KD to outperform one-hot training, because the $\vp_\tar$ is closer to $\vp^*$ than $\ve_y$ is.
The last panel in this row also demonstrates that $\|\vq-\vp^*\|_2$ might be smaller than $\|\vp_\tar-\vp^*\|_2$, which can be considered as an explanation for why iterated self-distillation, e.g., Born Again Networks \citep{BAN}, can improve the performance.
For the ESKD case, we see the overfitting almost disappear: the distribution of the blue points do not change too much after the early stopping criterion is satisfied.

In the last row, which illustrates the training under the supervision of $\vp^*$, we find all the blue points move toward the x-axis, i.e., their $\vp_\tar=\vp^*$, and finally converge to it. There is also no overfitting in this case. From the last two panels, we see the disperse of blue points is significantly smaller than all other settings, which means the network's prediction is quite close to $\vp^*$. Hence the performance of this case is the best.

\clearpage %
\section{More about the decomposition and NTK model} \label{sec:ntk-model}

\begin{proof}[Proof of \cref{prop:decomposition}]
Recall $\vz(\vx)=f(\vw,\vx)$ is the vector of output logits, and $\vq=\text{Softmax}(\vz)$ is the output probability.
We are taking a step of SGD observing $\xu$,
and observing the change in predictions on $\xo$.
We begin with a Taylor expansion,
\[
    \underbrace{ \vq^{t+1}(\xo) }_{K \times 1}
    - \underbrace{ \vq^{t}(\xo) }_{K \times 1}
    =
    \underbrace{
        \nabla_{\vw}\vq^t(\xo)|_{\vw^t}
    }_{K \times d}
    \cdot 
    \underbrace{ \big( \vw^{t+1} - \vw^t \big) }_{d \times 1}
    {} + \mathcal O\big(\lVert \vw^{t+1} - \vw^{t} \rVert^2\big)
.\]
To evaluate the leading term,
we plug in the definition of SGD
and repeatedly use the chain rule:
\begin{align*}
    \underbrace{
        \nabla_{\vw}\vq^t(\xo)|_{\vw^t}
    }_{K \times d}
    \cdot 
    \underbrace{ \big( \vw^{t+1} - \vw^t \big) }_{d \times 1}
    &=  \big(
        \underbrace{
            \nabla_{\vz}\vq^t(\xo)|_{\vz^t}
        }_{K \times K}
        \cdot
        \underbrace{
            \nabla_{\vw}\vz^t(\xo)|_{\vw^t}
        }_{K \times d}
        \big)
        \cdot\big(-\eta
        \underbrace{
        \nabla_{\vw}L(\xu)|_{\vw^t}
        }_{1 \times d}
        \big)\tp
\\
    &=
    \underbrace{
        \nabla_{\vz}\vq^t(\xo)|_{\vz^t}
    }_{K \times K}
    \cdot
    \underbrace{
        \nabla_{\vw}\vz^t(\xo)|_{\vw^t}
    }_{K \times d}
    \cdot
    \big(
    \underbrace{
        -\eta\nabla_{\vz}L(\xu)|_{\vz^t}
    }_{1 \times K}
    \cdot
    \underbrace{
        \nabla_{\vw}\vz^t(\xu)|_{\vw^t}
    }_{K \times d}
    \big)\tp
\\
    &= -\eta
      \underbrace{\nabla_{\vz}\vq^t(\xo)|_{\vz^t}}_{K \times K}
      \cdot
      \big[
        \underbrace{\nabla_{\vw}\vz^t(\xo)|_{\vw^t}}_{K \times d}
        \cdot
        \underbrace{\left(\nabla_{\vw}\vz^t(\xu)|_{\vw^t}\right)\tp}_{d \times K}
      \big]
      \cdot
      \underbrace{
      \big(
        \nabla_{\vz}L(\xu)|_{\vz^t}
      \big)\tp
      }_{K \times 1}
    \\
    &= \eta\cdot \mathcal{A}^t(\xo)\cdot 
\mathcal{K}^t(\xo,\xu)\cdot\left(\vp_\tar(\xu)-\vq^t(\xu)\right) 
.\end{align*}

For the higher-order term, using as above that
\[
    \vw^{t+1} - \vw^t
    = 
    -\eta
    \nabla_{\vw}\vz^t(\xu)|_{\vw^t}\tp
    \cdot
    \big(\vp_\tar(\xu) - \vq^t(\xu) \big)
\]
and noting that, since the vectors are probabilities,
$\lVert \vp_\tar(\xu) - \vq^t(\xu) \rVert$
is bounded,
we have that 
\[
    \mathcal O\big(\lVert \vw^{t+1} - \vw^{t} \rVert^2\big)
    = \mathcal O\big(
    \eta^2
    \,
    \lVert \left( \nabla_{\vw} \vz(\xu)|_{\vw^t} \right) \tp \rVert_\mathrm{op}^2
    \,
    \lVert \vp_\tar(\xu) - \vq^t(\xu) \rVert^2
    \big)
    = \mathcal O\big(
    \eta^2
    \lVert \nabla_{\vw} \vz(\xu) \rVert_\mathrm{op}^2
    \big)
. \qedhere\]
\end{proof}

In the decomposition,
\begin{equation}
    \mathcal{A}^t(\xo)
    = \nabla_z \vq^t({\color{orange} \vx_o})|_{\vz^t}
    = \left[
    \begin{matrix}
        q_1(1-q_1)  &-q_1 q_2   &\cdots     &-q_1 q_K \\
        -q_2 q_1    &q_2(1-q_2) &\cdots     &-q_2 q_K \\
        \vdots      & \vdots    &\ddots     &\vdots  \\
        -q_K q_1    & -q_K q_2  &\cdots     &q_K (1-q_K)
    \end{matrix}
    \right],
\end{equation}
which is a symmetric positive semi-definite (PSD) matrix\footnote{The matrix $\mathcal A$ can be observed to be the covariance matrix of a categorical distribution with item probabilities $q$, and hence PSD.} with trace $\tr(\mathcal{A}^t(\xo))=1-\sum_{i=1}^Kq_i^2$.
As we have $\sum_i q_i=1$, it is easy to check the trace of this matrix is larger at the beginning of training (when $\vq$ tends to be flat) than that at the end of training ($\vq$ tends to be peaky), as illustrated by most panels in Figure~\ref{fig:matrixAK}.
Given that the trace of a matrix is the sum of its eigenvalues and $\mathcal{A}^t(\xo)$ is PSD, smaller $\tr(\mathcal{A}^t(\xo))=1-\sum_{i=1}^Kq_i^2$ means this matrix will tend to shrink its inputs more.
Hence the change in predictions tends to decrease when $\vq^t$ becomes more peaky.

The second term in that expression, $\mathcal{K}^t(\xo,\xu)$, is the outer product of gradients at $\xo$ and $\xu$. 
Intuitively, if their gradients have similar directions, this matrix is large, and vice versa.
This matrix is known as the empirical neural tangent kernel,
and it can change through the course of training as the network's notion of ``similarity'' evolves.
For appropriately initialized very wide networks trained with very small learning rates, $\mathcal K^t$ remains almost constant during the course of training,
and is almost independent of the initialization of the network parameters;
the kernel it converges to is known as the neural tangent kernel \citep{NTK,arora2019exact}.

\begin{figure}[tb]
    \centering
    \includegraphics[width=\textwidth]{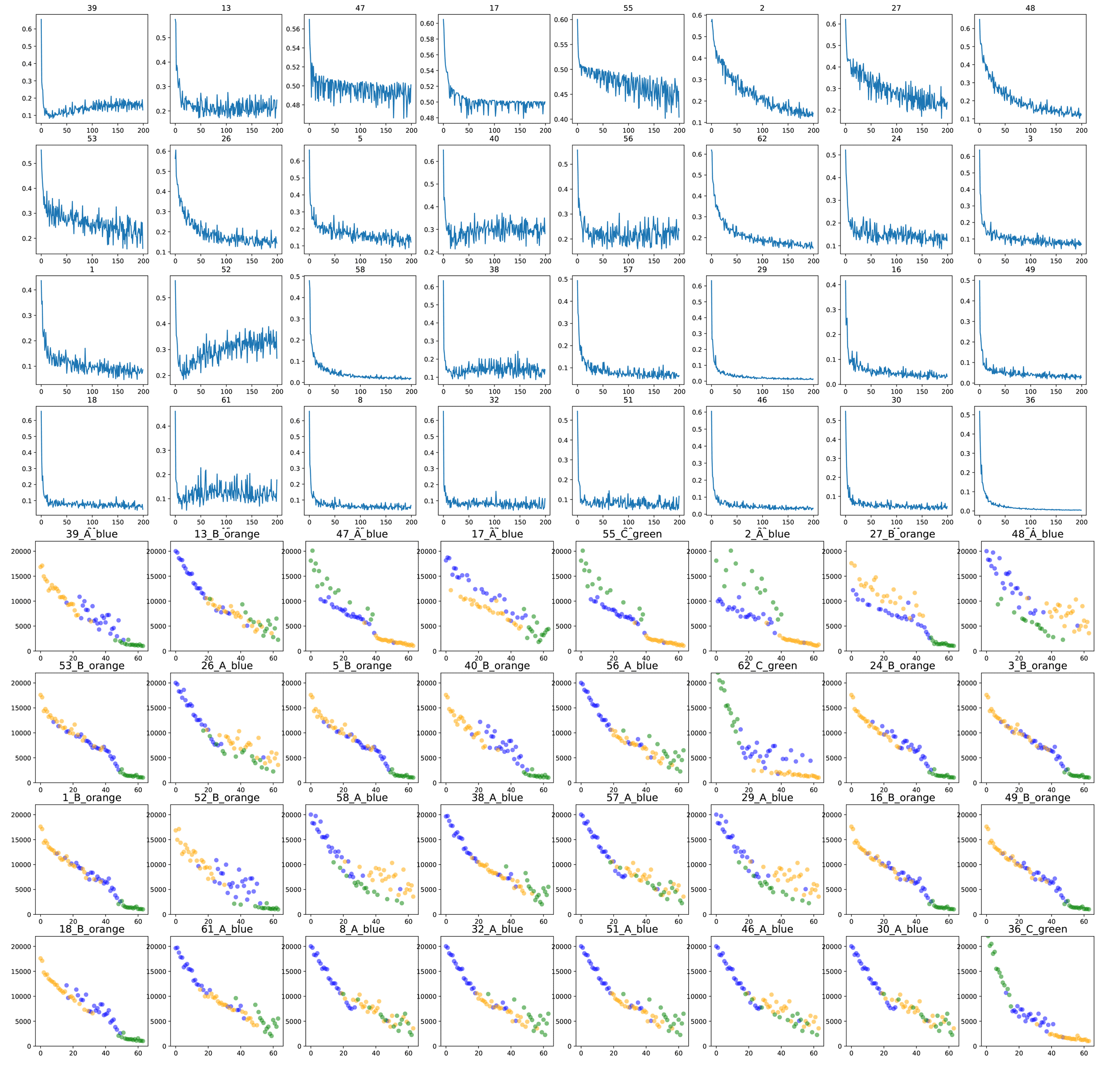}
    \caption{Upper panels: how $\tr(\mathcal{A}^t(\vx_n))$ changes during training. Each panel represents a specific $\vx_n$. The panels are ordered by their integral difficulty, from left-to-right and up-to-down. The x-axis is the number of epochs, and the y-axis is $\tr(\mathcal{A}^t(\vx_n))$. Lower panels: the correlation between $\cos(\xo, \xu)$ and $\tr(\mathcal{K}^0)$. Panels are ordered by the integral difficulty of $\xo$. The subtitle of the panel gives the sample ID, the class it belongs to (i.e., A, B, or C) and the color of their corresponding class (i.e., A is blue, B is orange and C is green).}
    \label{fig:matrixAK}
\end{figure}

\paragraph{Learning Dynamics of $\vq(\xo)$} In \cref{prop:decomposition}, we decompose $\vq^{t+1}(\xo) - \vq^t(\xo)$ into three parts; we use this to analyze what the learning path of a training sample might be like in \cref{sec:expl-patterns}. Here we will provide more detailed illustration of the two groups of force imposed on $\vq(\xo)$. 

Specifically, $\vq^{t+1}(\xo) - \vq^t(\xo)$ will be influenced by two variables, i.e., updating sample $\xu$ and time $t$. We discuss their effects separately. For $\xu$, only the last two terms, i.e., $\mathcal{K}^t(\xo,\xu)$ and $\left(\vp_\tar(\xu)-\vq^t(\xu)\right)$ depends on it.

In \cref{sec:expl-patterns}, we claim that if $\xo$ and $\xu$ are similar, the norm of $\mathcal{K}^0(\xo,\xu)$ might be large, and vice versa.
Here we empirically demonstrate this using a toy Gaussian dataset, as illustrated in \cref{fig:matrixAK}.
The figure shows how the similarity between $\xo$ and $\xu$ correlates with $\tr(\mathcal{K}^0(\xo,\xu))$.
Each panel represents one $\xo$, which is claimed in the title of the subfigures.
The x-axis represents the rank of the cosine similarity between observed $\xo$ and all $N$ training samples (including itself).
The y-axis is the trace of $\mathcal{K}^0(\xo,\xu)$.
The color of each scatter point is the true label of $\xu$.
From the figure, we can observe a clear decreasing trend, which means smaller similarity leads to larger $\tr(\mathcal{K}^0(\xo,\xu))$.
Additionally, the term $\left(\vp_\tar(\xu)-\vq^t(\xu)\right)$ provides a direction that $\vq^t(\xo)$ should move towards.

We claim in \cref{sec:expl-patterns} that at any point in the input space, the labels of these input samples might follow the ground truth distribution, i.e., $p^*(y|\vx)$.
Hence most of the neighbouring $\vx$ might pull $\vq(\xo)$ towards its ground truth $p^*(y|\xo)$.
We will discuss the norm of this term when discussing the influence of $t$.
In short, at any time, the neighboring $\xu$ will impose stronger force on $\xo$ and the direction of the force roughly points to the ground truth $p^*(y|\xo)$.

As discussed, $\mathcal K^t$ is roughly constant over $t$ in the very-wide limit.
For finite width networks, however,
it adapts to reflect the network's new ``understanding'' of similarities.
For instance, it might learn that certain types of images are more semantically similar than the randomly-initialized network thought.
This does not fundamentally change our intuitions as long as it doesn't happen too often,
but could potentially lead to more complicated zig-zag patterns
as the network's estimate of $\vp^*$ from neighboring points perhaps improves over time.

Over the course of training,
$\mathcal{A}^t(\xo)$ and $\left(\vp_\tar(\xu)-\vq^t(\xu)\right)$
will also change.
In practical regimes, none of these terms have an easy analytical expression w.r.t. $t$:
$\vq^t$ is quite complicated.
Thus, we provide some intuition, with experimental verification.
In \cref{fig:matrixAK}, we show how $\tr(\mathcal{A}^t(\xo))$ depends on $\vq^t$: flat $\vq^t$ leads to larger trace. A similar trend also exists in the norm of $\left(\vp_\tar(\xu)-\vq^t(\xu)\right)$. As the initialized $\vq$ tend to be flat, updates of any samples will influence network's parameters a lot. When the training progresses, those easy samples converge fast, so their $\|\vp_\tar(\xu)-\vq^t(\xu)\|_2$ and $\tr(\mathcal{A}^t)$ become small. However, as the $\vq^t$ for the hard sample is still far away from its $\ve_y$, the large $\|\vp_\tar(\xu)-\vq^t(\xu)\|_2$ and $\tr(\mathcal{A}^t)$ will finally drag $\vq^t$ back towards the one-hot distribution, as illustrated in Figure~\ref{fig:zig_zag_3}.

\section{Comparison to \texorpdfstring{\citet{liu2020early} and \citet{KD_denoise}}{Liu et al. (2020) and Huang et al. (2020)}} \label{sec:alg_compare}
The main claim of this paper is that better supervisory signals can enhance the generalization ability of the trained model. 
Inspired by the success of KD, we find that the neural network can spontaneously refine those ``bad'' labels during training by observing their learning path. 
The learning path of those hard samples will first move towards their true $p(y|x)$ and then converge to their label $p_\text{tar}$ or $e_y$. 
We explain why this phenomenon occurs by expanding the gradients of each training sample. 
This phenomenon is also explained in \cref{prop:decomposition}, and formally proved for a particular softmax regression model by \citet{liu2020early}.
As a complement, we propose an explanation using an NTK model, 
and experimentally verify it by observing the learning path and distance gap during training.

Another difference between these two works is that we consider the problem of refining supervisory signals, while \citet{liu2020early} consider correcting wrong labels (a special case of ``bad'' supervision). 
Our work provides additional emphasis and empirical study for the clean-label case.

Regarding the algorithm,  \citet{liu2020early} design an effective regularization term inspired by early stopping regularization.
They apply exponential moving average (EMA) on the model's output when calculating this regularization term to further enhance the performance.
This method is similar to that proposed by \citet{KD_denoise},
who switch between optimizing the training loss and an objective based on the EMA of the model over the course of training.

Although it bears significant similarity to these methods, Filter-KD does not change the course of training the teacher model.
Rather, we propose (based on the high variance of the zig-zag learning paths) to simply use the the EMA of that model's outputs as a teacher for later distillation.

We suspect that these three algorithms work
because of essentially the same underlying principle,
whether we think of this as being based on the zig-zag learning path or as early-stopping regularization.
We expect that this principle will be helpful moving forward in the field's understanding of the learning dynamics of SGD methods for neural networks.

\section{Low pass filter on network parameters} \label{sec:filter-params}

In \cref{sec:real-tasks}, we point out the high variance issue of the traditional KD methods after observing the learning path of those hard samples. We then propose a Filter-KD algorithm to smooth the output of each training sample during training. Such a low pass filtering method is quite similar to the momentum mechanism used in self-supervised learning, e.g., from the classic method of \citet{momentum0} to MOCO \citep{MOCO}, which conduct low pass filtering on each parameter of the network. As mentioned before, the proposed Filter-KD algorithm might require more memory when the dataset becomes larger, because $\vq_\text{smooth}$ would record every training sample's prediction. 

Conducting low pass filtering on network parameters might be a good way to solve this. To verify whether this method works, we train a ResNet18 on CIFAR100, using one-hot supervision. At the beginning of training, we copy the training model and call it tracking model. At the end of each update (i.e., each batch), the parameters of training model is updated as usual while the tracking model's parameters are updated with momentum. Specifically, for the training model, $\vw_\text{train}^{t+1}=\vw_\text{train}^t+\eta\nabla L$, for the tracking model, $\vw_\text{track}^{t+1}=(1-\alpha)\vw_\text{track}^t+\alpha\vw_\text{train}^{t+1}$. We train the model for 150 epochs, and show the learning curve of test accuracy in the left panel of Figure~\ref{fig:app_track_vs_oht}. We see the performance of the tracking model converges faster than the training model, which is reasonable because filtering parameters is regularizing the training process. However, the converging accuracy of this two models are the same. At the same time, we find this tracking model is not as good as $\vq_\text{smooth}$ when teaching a new model, as illustrated in the right panel of Figure~\ref{fig:app_track_vs_oht}.

As this paper mainly focuses on the behavior of the model's prediction, the discussion and experiments of filtering in parameter space is limited. However, as many papers demonstrates the effectiveness of this momentum mechanism, we think it is important to explore the relationship further.

\begin{figure}
    \centering
    \includegraphics[width=\textwidth]{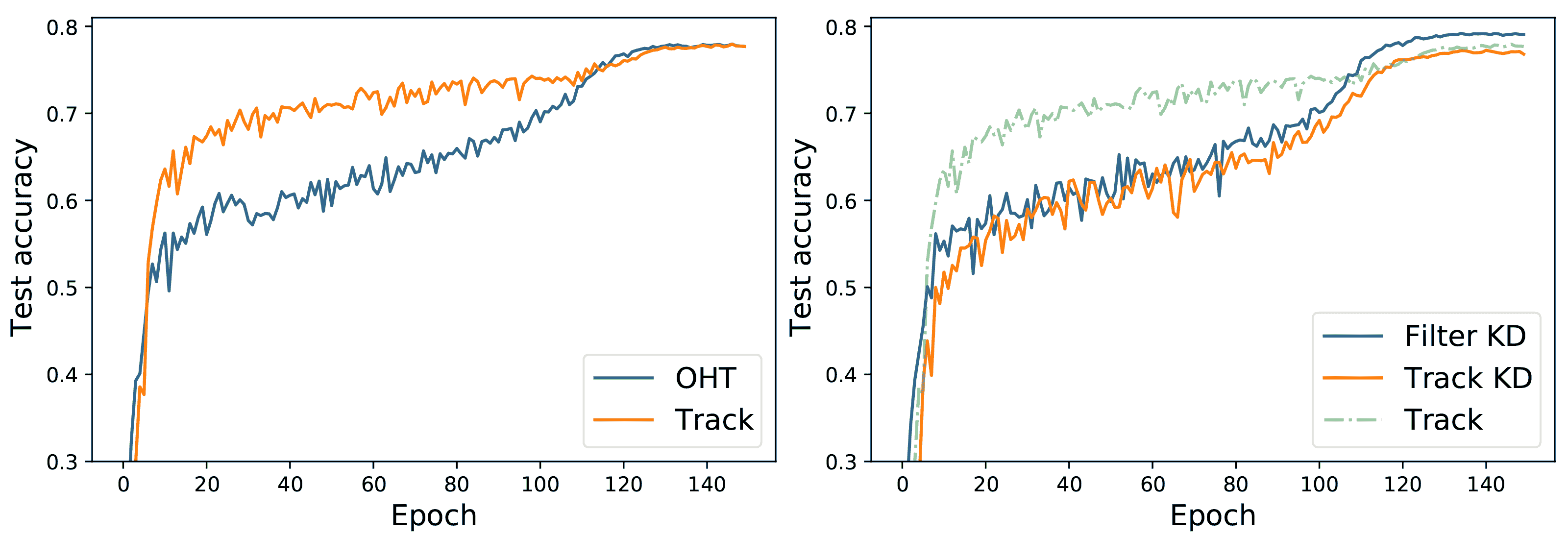}
    \caption{Behavior of the tracking model (with low pass filter on each parameters). ResNet18 trained for 150 epochs on CIFAR100.}
    \label{fig:app_track_vs_oht}
\end{figure}

\begin{table}[ht]
    \centering
    \begin{tabular}{c|ccccc|ccccc}
    \hline
     & \multicolumn{5}{c|}{Test ACC} & \multicolumn{5}{c}{Test ECE} \\ \cline{2-11} 
     & Run1 & Run2 & Run3 & Run4 & Run5 & Run1 & Run2 & Run3 & Run4 & Run5 \\ \hline
    \textbf{OHT} & 95.35 & 95.30 & 95.42 & 95.23 & 95.42 & 0.027 & 0.027 & 0.026 & 0.025 & 0.025 \\
    \textbf{KD} & 95.30 & 95.38 & 95.42 & 95.44 & 95.42 & 0.027 & 0.027 & 0.027 & 0.026 & 0.026 \\
    \textbf{ESKD} & 95.29 & 95.41 & 95.39 & 95.58 & 95.42 & 0.026 & 0.029 & 0.025 & 0.028 & 0.027 \\
    \textbf{FilterKD} & 95.66 & 95.68 & 95.49 & 95.76 & 95.58 & 0.005 & 0.006 & 0.008 & 0.011 & 0.006 \\ \hline
    \textbf{OHT} & 78.27 & 78.31 & 77.97 & 77.78 & 78.02 & 0.053 & 0.053 & 0.052 & 0.053 & 0.054 \\
    \textbf{KD} & 78.64 & 78.55 & 78.03 & 78.18 & 78.49 & 0.060 & 0.057 & 0.062 & 0.066 & 0.059 \\
    \textbf{ESKD} & 78.84 & 78.73 & 78.85 & 78.97 & 78.74 & 0.065 & 0.066 & 0.067 & 0.070 & 0.066 \\
    \textbf{FilterKD} & 79.87 & 79.93 & 80.19 & 80.22 & 80.23 & 0.028 & 0.026 & 0.034 & 0.028 & 0.031 \\ \hline
    \end{tabular}
    \caption{Each run of results in \cref{tab:main-results}. In each run, the initialization of student networks for different methods are controlled to be the same.}
    \label{tab:main_results_each_run}
\end{table}

\end{document}